\documentclass[11pt,a4paper]{article}
\usepackage[T1]{fontenc}
\usepackage{lmodern}
\usepackage[margin=0.83in]{geometry}
\usepackage{amsmath,amssymb,amsthm,mathtools,bm}
\usepackage{graphicx}
\graphicspath{{figures/}}
\usepackage{booktabs,tabularx,array}
\usepackage{enumitem}
\usepackage{microtype}
\usepackage{xcolor}
\usepackage{algorithm}
\usepackage{algpseudocode}
\usepackage{tikz}
\usepackage{pgfplots}
\pgfplotsset{compat=1.18}
\usepgfplotslibrary{groupplots}
\usetikzlibrary{positioning,arrows.meta,patterns}
\usepackage[round,authoryear]{natbib}
\usepackage{hyperref}
\usepackage[nameinlink,capitalise,noabbrev]{cleveref}
\hypersetup{colorlinks=true,linkcolor=blue!50!black,citecolor=blue!50!black,urlcolor=blue!50!black,pdftitle={Certify the Decision, Not Every Score: Exact Top-k Selection from Few Evaluations},pdfauthor={Yuhe Sui; Yingzhi Tang}}
\setlist{nosep,leftmargin=*}
\newtheorem{theorem}{Theorem}

\newtheorem{proposition}[theorem]{Proposition}
\newtheorem{corollary}[theorem]{Corollary}
\newtheorem{definition}[theorem]{Definition}

\newcommand{\ES}{\operatorname{ES}}
\newcommand{\Ccert}{C_{\mathrm{cert}}^\star}
\newcommand{\Risk}{\rho}
\newcommand{\poly}{\operatorname{poly}}
\newcommand{\R}{\mathbb{R}}
\newcommand{\id}{\mathrm{id}}
\newcommand{\cas}[1]{\textnormal{\textsc{CAS-#1}}}
\newcommand{\caspg}{\cas{PG}}
\newcommand{\OPEN}{\textnormal{\textsc{Open}}}
\providecommand{\Description}[1]{}
\algrenewcommand\algorithmicrequire{\textbf{Input:}}
\algrenewcommand\algorithmicensure{\textbf{Output:}}
\definecolor{cascharcoal}{HTML}{252525}
\definecolor{casslate}{HTML}{657681}
\definecolor{casgrey}{HTML}{A9B3BC}
\definecolor{casred}{HTML}{B5161B}
\pgfplotsset{casbars/.style={
    xbar, /pgf/bar shift=0pt, xmin=0, y dir=reverse,
    axis x line*=bottom, axis y line*=left, axis line style={cascharcoal},
    tick style={cascharcoal}, ytick style={draw=none},
    tick label style={font=\scriptsize}, label style={font=\scriptsize},
    title style={font=\footnotesize, yshift=-3pt},
    xmajorgrids, grid style={black!10},
    nodes near coords, point meta=explicit symbolic,
    every node near coord/.append style={font=\scriptsize, anchor=west, text=cascharcoal, inner sep=2pt},
    scaled x ticks=false}}
\title{\textbf{Certify the Decision, Not Every Score:}\\ Exact Top-$k$ Selection from Few Evaluations}
\author{%
\textbf{Yuhe Sui}\\
\small Quantitative Research Society, Singapore\\
\small Nanyang Technological University, Singapore\\
\small \href{https://orcid.org/0009-0002-7456-0804}{ORCID: 0009-0002-7456-0804}
\and
\textbf{Yingzhi Tang}\\
\small Quantitative Research Society, Singapore\\
\small The Chinese University of Hong Kong, Shenzhen, China\\
\small \href{https://orcid.org/0009-0004-7571-0100}{ORCID: 0009-0004-7571-0100}
}
\date{October 2026}
\begin{document}
\maketitle
\begin{abstract}
Choosing a few robust algorithms or schedules can require evaluating every candidate under many adverse scenarios, even though an agent only needs to identify the winners. A predictor may suggest those winners, but cannot rule out a surprise in an unevaluated scenario. We introduce Certified Adaptive Selection (CAS), which returns the exact top-$k$ set as soon as remaining uncertainty cannot change it. CAS keeps guaranteed lower and upper bounds on Expected Shortfall (ES), the average loss in the worst scenarios. Its stopping test compares the worst admissible selected candidate with the best admissible excluded one; for information-disjoint Cartesian intervals this one-cut test is necessary and sufficient. Prediction has a separate role: selecting the next outcome to evaluate, never authorizing the decision. For empirical ES with exact unit-cost queries, we compute in polynomial time the fewest evaluations that could have proved the realized decision in hindsight. Our prediction-guided agent \caspg{} is exact for every predictor, including adversarial ones. On two public algorithm-selection matrices, \caspg{} certifies the exact top three after 630 of 3150 and 981 of 2200 replayed queries, or $1.14$ and $1.18$ times that hindsight minimum. Without historical outcomes, its transcript-only variant requires $1.28$ and $1.37$ times the minimum, outperforming the tested non-predictive certified policies. Separately, a prediction-free CAS policy avoids $93.0\%$ of exact recourse solves in synthetic energy scheduling. One certificate thus supports several acquisition policies and separately proved risk/cost extensions: prediction changes the work, not the justified choice.
\end{abstract}
\noindent\textbf{Keywords:} certified selection; learning-augmented agents; costly information acquisition; Expected Shortfall; explorable uncertainty.
\medskip

\tableofcontents
\bigskip
\section{Introduction}
\label{sec:intro}

An agent must retain a small set of candidates, but each candidate is judged across many possible adverse scenarios. The objective may be to keep the three algorithms with the smallest worst-case-average loss across benchmark instances, or to choose a few schedules with the lowest tail cost across service conditions. In such tasks, one candidate--scenario evaluation can be a solver run or an exact recourse optimization. A financial hedge screen poses a related question when every market shock requires nonlinear repricing. The number of evaluations grows as the number of candidates times the number of scenarios, even though the agent ultimately needs a set of winners, not a complete matrix of scores.

There is a sharp distinction between \emph{predicting} that a candidate is good and \emph{knowing} that no unevaluated outcome could change the choice. Exhaustive evaluation resolves the second question but is expensive. A learned surrogate may resolve the first cheaply yet fail at the selection boundary. Accurate average predictions do not certify which candidate is ranked $k$ rather than $k+1$.

\textbf{Our answer is to give information acquisition and decision authority separate jobs.} Certified Adaptive Selection (CAS) maintains a valid interval for each candidate's tail-risk score. It stops only after the largest possible score of a selected candidate is smaller than the smallest possible score of every excluded candidate (with a fixed rule for ties). For example, if the prospective winner's risk lies in $[0.20,0.40]$ and its nearest competitor's in $[0.46,0.70]$, further score evaluations cannot change their ordering. A predictor may help the agent reach that separation sooner, but it cannot authorize the stop. Thus the agent certifies \emph{membership}, not every individual score.

This separation makes efficiency measurable. On a fully specified realization we can ask for the smallest collection of outcomes whose revelation already forces the right top-$k$ set. We call its cost the \emph{hindsight certificate optimum}, $\Ccert$. It is not an online policy: its query set may exploit information hidden from the agent. Our one-cut characterization is necessary and sufficient when compatible losses of different candidates can vary jointly; for empirical Expected Shortfall (ES), we show how to compute $\Ccert$ and an attaining query set in polynomial time. This supplies an exact instance-specific yardstick, rather than only comparison with evaluating every candidate--scenario outcome.

The online policy \caspg{} then asks which observation should be bought next. It works on the candidates currently blocking the certificate and prioritizes scenarios predicted to contribute to the tail. Predictions may come from a historical model, from values already revealed during the run, or from both. The proof of exactness is unchanged if these predictions are unhelpful: the policy always checks the same deterministic certificate before returning a decision. A worst-case guarantee of being close to $\Ccert$ is impossible for zero-error online acquisition without further information; the empirical question is whether prediction closes the observed gap.

On two public ASlib matrices, \caspg{} takes 630 of 3150 and 981 of 2200 exact-value queries, respectively: $1.14\times$ and $1.18\times\Ccert$. A like-for-like variant using \emph{no historical outcomes} needs 709 and 1144, versus at least 1872 and 1699 for the tested deterministic non-predictive certified schedulers. These are query counts in a replay oracle, not measured reductions in algorithm-running time. A separate synthetic energy study tests genuine recourse solves with a non-predictive CAS scheduler; it avoids $93.0\%$ of the full solve count. Neither result transfers the observed public-benchmark cost ratios to shared-primitive financial pricing or to interval-valued oracles.

\paragraph{Contributions and scope.}
The central contribution is \emph{exact risk-sensitive selection with prediction restricted to acquisition}. Its theoretical foundation is a necessary-and-sufficient decision certificate and a computable empirical-ES hindsight benchmark (\cref{sec:model,sec:hindsight}). Its algorithmic realization is \caspg{}, whose exactness is independent of its predictor and whose query cost is assessed against that benchmark (\cref{sec:caspg,sec:public}). Native recourse solves and controlled stress tasks test what the certificate layer achieves under other conditions (\cref{sec:energy,sec:stress}). We separately state extensions to general spectral risk and priced certificates, and the distinct assumptions of the motivating financial setting (\cref{sec:spectral}). The complete proofs, protocol specifications, diagnostic figures, and evidence boundaries are provided in the appendices, allowing this version to be read and checked without a separate supplement.

\paragraph{Guide to the extended version.}
The main text establishes the decision problem, the certificate, the hindsight optimum, the online policy, and the experiments in that order. \Cref{supp:cut,supp:es,supp:caspg,supp:online} provide technical verification of the central results; \cref{supp:coarsening,supp:lower,supp:structural,supp:ptas} contain the general-risk and priced-certificate theory; and \cref{supp:protocol,supp:merton} document the application-specific evidence and the nonlinear pricing enclosure. These appendices are part of this paper, not separate claims of production FRTB speedup.
\section{When is the selected set already certain?}
\label{sec:model}

\subsection{From intervals to a decision}

Consider selecting the single lowest-risk candidate from three candidates. Suppose their current \emph{certified} risk ranges are $[0.20,0.40]$, $[0.46,0.70]$ and $[0.80,0.91]$. The first candidate is already certain to win, although none of its scenario losses need be known exactly. By contrast, if its upper bound were $0.55$, the first two ranges would overlap. Their current bounds alone would not settle which candidate wins. This is the distinction on which CAS acts: uncertainty about scores is acceptable until it can change the chosen set.

In general, let $m$ candidates have $S$ losses each. The loss vector of candidate $j$ is $x_j=(x_{j1},\ldots,x_{jS})$. Before evaluation, each loss lies in a certified public interval, giving the Cartesian box
$B_j^0=\prod_{s=1}^S[\ell^0_{js},u^0_{js}]$.
A \emph{query} of $(j,s)$ returns $x_{js}$ exactly and replaces that coordinate's interval by a point. Unless stated otherwise, each query costs one unit. The theorem model is \emph{information-disjoint}: rows share no queryable primitive, and every combination of compatible row values is feasible. This is a property of the uncertainty set, not a claim of stochastic independence.

We compare candidates by a tail-risk score: larger adverse losses count more. Empirical Expected Shortfall with tail mass $t\in(0,S]$ averages the worst $t$ scenario losses, with fractional weight on the boundary loss when $t$ is noninteger \citep{AcerbiTasche2002,RockafellarUryasev2002}. More generally, for descending losses $y_{(1)}\ge\cdots\ge y_{(S)}$, a coherent spectral score is \citep{Artzner1999,Acerbi2002}
\begin{equation}
\Risk_\lambda(y)=\sum_{r=1}^S\lambda_r y_{(r)},\qquad
\lambda_1\ge\cdots\ge\lambda_S\ge0,\quad \sum_r\lambda_r=1.
\label{eq:spectral}
\end{equation}
ES is the particular spectrum that gives weight $1/t$ to the worst $t$ losses (and the appropriate fractional boundary weight). Smaller scores are better. Fixed distinct candidate IDs break ties: $(\Risk_\lambda(x_j),\id_j)$ is compared lexicographically, denoted $\prec$, and $D_k(x)$ is the set of the $k$ smallest such keys.

\begin{definition}[Decision certificate]
\label{def:certificate}
For a set $Q$ of evaluated candidate--scenario primitives, let $B_Q(x)$ be the original Cartesian box restricted to completions agreeing with the realized losses $x$ on $Q$. The query set $Q$ \emph{certifies} the decision when $D_k(v)=D_k(x)$ for every $v\in B_Q(x)$.
\end{definition}

The definition requires invariance over \emph{all} compatible completions. No error probability or assumed predictor calibration is involved. Because spectral risk is coordinatewise monotone, the smallest and largest compatible risk scores of row $j$ are attained at its current lower and upper corners. After queries $Q_j$ in that row, write these exact bounds as
$L_j(Q_j)=\Risk_\lambda(\ell_j^0[Q_j\leftarrow x_j])$ and
$U_j(Q_j)=\Risk_\lambda(u_j^0[Q_j\leftarrow x_j])$.

\subsection{A necessary and sufficient stopping test}

The illustrated non-overlap condition extends to any $k$: each selected candidate's \emph{worst} compatible key must precede each excluded candidate's \emph{best} compatible key. Under the Cartesian model this is not just sufficient, but necessary.

\begin{theorem}[One-cut exact composition]
\label{thm:cut}
Assume Cartesian, information-disjoint rows with additive query costs, attained exact extrema and fixed ID ties. Write $A=D_k(x)$. An allocation certifies $A$ exactly when
\[
\max_{i\in A}(U_i,\id_i)\prec
\min_{j\notin A}(L_j,\id_j).
\]
Equivalently, one lexicographic cut separates the upper keys of all selected rows from the lower keys of all excluded rows. For unit costs, polynomial row frontiers compose by scanning such cuts and summing the least row budgets that cross each cut.
\end{theorem}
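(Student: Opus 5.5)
The plan is to prove the two directions of the equivalence separately and then read off the composition claim. Throughout, the facts already stated before the theorem do the work. Spectral risk is coordinatewise monotone, so within $B_Q(x)$ the set of achievable scores of row $j$ has exact extrema $L_j(Q_j)$ and $U_j(Q_j)$, attained at the lower and upper corners of that row's residual box. Information-disjointness means $B_Q(x)$ is a product of the row boxes, so these row values can be chosen independently.

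\emph{Sufficiency.} Suppose $\max_{i\in A}(U_i,\id_i)\prec\min_{j\notin A}(L_j,\id_j)$. Take any $v\in B_Q(x)$. For every $i\in A$ and $j\notin A$ we have
\[
(\Risk_\lambda(v_i),\id_i)\preceq(U_i,\id_i)\prec(L_j,\id_j)\preceq(\Risk_\lambda(v_j),\id_j).
\]
So the $k$ keys of $A$ all precede every other key, and $D_k(v)=A=D_k(x)$.

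\emph{Necessity.} Suppose instead the cut fails. Then there are $i\in A$ and $j\notin A$ with $(L_j,\id_j)\prec(U_i,\id_i)$; the ids are distinct, so equality of keys is impossible. Build a single $v\in B_Q(x)$ by putting row $i$ at its upper corner, row $j$ at its lower corner, and leaving every other row at its realized value $x$. The Cartesian product structure is exactly what makes this joint choice admissible. Every row $r\notin\{i,j\}$ keeps its realized key. In $x$, row $i$ precedes every excluded row, and in $v$ row $i$'s key has only moved up, to $(U_i,\id_i)$. Likewise row $j$'s key has only moved down. Now $j$ precedes $i$ in $v$. Hence either $j$ enters the $k$ smallest keys, or $i$ is pushed out: a simple count shows that at least $k$ keys precede $(U_i,\id_i)$ whenever $j$ does not enter. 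In both cases $D_k(v)\neq A$, so $Q$ does not certify the decision.

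The step I expect to need the most care is this necessity direction. There are two separate issues. First, the extremal vector must genuinely lie in $B_Q(x)$, which uses both attainment of the extrema and information-disjointness; with shared primitives the construction fails, and so does the theorem. Second, the counting argument must be airtight under lexicographic tie-breaking. I would handle the counting by showing directly that, with $i$ and $j$ swapped in order relative to each other and all other keys fixed, the set of the $k$ smallest keys cannot remain $A$.

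\emph{Composition.} The certificate condition is equivalent to the existence of a threshold key $\tau$ with $(U_i,\id_i)\prec\tau$ for all $i\in A$ and $\tau\preceq(L_j,\id_j)$ for all $j\notin A$. Costs are additive and each row's bound depends only on that row's own queries, so for fixed $\tau$ the minimum cost decomposes into a sum over rows. For each selected row we need the least budget that drives $U_i$ below $\tau$, and for each excluded row the least budget that lifts $L_j$ to at least $\tau$. Each row frontier is monotone in its budget and is assumed polynomially computable. Optimal thresholds can be taken among the polynomially many candidate keys, namely the frontier values $U_i(b)$ and $L_j(b)$ together with their ids. Scanning these cuts and summing the row minima therefore yields the optimum in polynomial time.
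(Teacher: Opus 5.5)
Your proposal is correct and follows the paper's proof. Sufficiency sandwiches every completion's key between the row endpoint keys. Necessity jointly realizes row $i$'s upper corner and row $j$'s lower corner under information-disjointness. Composition sums per-row least frontier budgets over a finite scan of cut keys. The paper scans selected upper frontier keys and requires excluded lower keys strictly above them; your dual threshold convention over frontier keys is equally valid.

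The counting step you flag as delicate is immediate. $D_k(v)=A$ would force every member of $A$, in particular $i$, to precede every non-member, in particular $j$.
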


\begin{proof}
If the cut holds, every completion gives each selected candidate a key no larger than its upper key, and each excluded candidate a key no smaller than its lower key. The set $A$ is therefore the top-$k$ set in every completion.

For necessity, suppose the cut fails. Then some $i\in A$ and $j\notin A$ have $(U_i,\id_i)\not\prec(L_j,\id_j)$. The row-upper and row-lower corners attain these endpoints. Information-disjointness permits a joint completion realizing the upper corner of $i$ and the lower corner of $j$. There, $j$ precedes $i$ under the fixed ID tie rule, contradicting invariance of $A$.

For the unit-cost composition statement, let $U_i^\star(q)$ and $L_j^\star(q)$ be the attained row frontiers defined in \cref{sec:hindsight}. Any certificate can replace each row query set by an attaining frontier set at no larger cardinality, weakly improving the separating inequalities. For every proposed selected upper-key cut, the cheapest row budgets can therefore be chosen independently; enumerating the finitely many selected frontier keys and summing these budgets yields an optimal original-coordinate certificate.
\end{proof}

Thus the cut is both a precise online stopping rule and the basis for the offline minimum-information calculation. Without information-disjointness, the necessity argument using jointly realizable corners need not apply.

\paragraph{One certificate, several branches.}
The five stages in \cref{fig:model} separate assumptions from agent choices. In the core loop, certified bounds and the one-cut test determine \emph{whether} to stop; the S3 policy determines \emph{what} to buy next. \caspg{}, \cas{PG-T}, \cas{PG-H}, \cas{PG-U}, \cas{PD}, \cas{W}, \cas{F} and \cas{CI} are online query policies. By contrast, \cas{SC}, \cas{SV} and \cas{PT} provide risk-side bounds, structural verification or hindsight price optimization under separate assumptions; they are not alternative online acquisition policies. Native energy changes the evaluation oracle; financial pricing changes both the oracle and the information model. Results from these branches are not interchangeable.
\begin{figure}[tbp]
\centering
\resizebox{0.98\textwidth}{!}{%
\begin{tikzpicture}[font=\scriptsize, >={Stealth[length=4pt]},
  core/.style={draw=cascharcoal, rounded corners=2pt, minimum height=0.91cm,
     text width=2.73cm, align=center, inner sep=2.6pt, fill=black!4},
  cert/.style={core, fill=black!10},
  order/.style={core, draw=casred, line width=0.9pt, fill=casred!3},
  branch/.style={draw=casslate, rounded corners=2pt, minimum height=1.18cm,
    text width=2.73cm, align=center, inner sep=2.4pt, fill=white},
  theory/.style={branch, dashed, fill=black!2},
  policies/.style={branch, draw=casred, fill=casred!3},
  noncore/.style={branch, densely dotted, fill=black!2},
  flow/.style={->, cascharcoal, line width=0.55pt},
  attach/.style={-, casslate, densely dashed, line width=0.55pt}]
\node[core] (s0) at (0,0) {\textbf{S0\; Information}\\ public loss boxes;\\ query and cost model};
\node[cert] (s1) at (3.48,0) {\textbf{S1\; Exact bounds}\\ compute lower / upper\\ ES endpoints};
\node[cert] (s2) at (6.96,0) {\textbf{S2\; Certificate}\\ selected upper $\prec$\\ excluded lower?};
\node[order] (s3) at (10.44,0) {\textbf{S3\; Acquisition}\\ choose unresolved\\ row and scenario};
\node[core] (s4) at (13.92,0) {\textbf{S4\; Evaluation}\\ buy response; update\\ public intervals};
\draw[flow] (s0)--(s1);
\draw[flow] (s1)--(s2);
\draw[flow] (s2)--node[above,inner sep=1pt]{no}(s3);
\draw[flow] (s3)--(s4);
\draw[flow] (s4.north) -- ++(0,0.41) -| (s1.north);
\draw[flow,casred] (s2.south) -- ++(0,-0.32) node[below,align=center,inner sep=1pt,text=cascharcoal]{\textbf{yes: exact top-$k$}};
\node[theory] (b0) at (0,-2.05) {\textbf{\cas{PT}: few price classes}\\ Offline certificate-cost PTAS\\ (separate theory)};
\node[theory] (b1) at (3.48,-2.05) {\textbf{\cas{SC}: coarsen spectrum}\\ \textbf{\cas{SV}: verify structure}\\ Spectral-risk theory};
\node[branch,fill=black!7] (b2) at (6.96,-2.05) {\textbf{One-cut stopping authority}\\ Exact for every query ordering\\ $\Ccert$: offline benchmark};
\node[policies] (b3) at (10.44,-2.05) {\textbf{\cas{PG}, \cas{PG-T}}\\ \textbf{\cas{PG-H}, \cas{PG-U}}\\ \cas{PD}, \cas{W}, \cas{F}, \cas{CI}\\ \emph{First two lines use prediction}};
\node[noncore] (b4) at (13.92,-2.05) {\textbf{ASlib: value replay}\\ \textbf{Energy: exact recourse}\\ Finance: intervals + shared};
\foreach \a/\b in {s0/b0,s1/b1,s2/b2,s3/b3,s4/b4}{\draw[attach] (\a.south) -- (\b.north);}
\node[anchor=west, text=casslate, font=\scriptsize] at (-1.48,-3.05)
 {Core: disjoint rows, exact point queries, unit costs. Dashed: theory; red: query policies; dotted: oracle settings.};
\end{tikzpicture}
}
\Description{Five-stage CAS certificate-and-query loop with branches aligned below the stages they affect. S0 carries the offline priced-query PTAS CAS-PT; S1 carries spectral-coarsening CAS-SC and structural-verification CAS-SV; S2 is the common necessary-and-sufficient one-cut certificate; S3 carries online prediction-guided CAS-PG with transcript-only, history-only and alternative-row variants, and non-predictive CAS-PD, CAS-W, CAS-F and CAS-CI; S4 shows ASlib replay, native exact energy recourse and a distinct financial interval/shared-primitive setting. Financial pricing also changes S0 and lies outside the disjoint exact-point-query theory. Only S2 authorizes termination.}
\caption{CAS is a \emph{shared certificate loop with distinct branches}, not a collection of independently certified predictors. The red S3 branch contains the main agent \caspg{} and alternative query policies; dashed branches handle risk and query-price theory in S0--S1. All core policies use the same S2 stopping rule. Finance changes both S0 and S4, falls outside the core model and has separately established correctness.}
\label{fig:model}
\end{figure}
\section{The least information needed in hindsight}
\label{sec:hindsight}

\subsection{An exact denominator for query cost}

A policy can be correct yet purchase far more observations than necessary for a particular realized matrix. To distinguish useful acquisitions from wasted ones, we compare its query count $C$ with the cheapest \emph{certifying} query set on that same matrix. Using the certificate of \cref{def:certificate}, define
\begin{equation}
\Ccert(x)=\min_Q\big\{c(Q): Q\text{ certifies }D_k(x)\big\}.
\label{eq:ccert}
\end{equation}
The minimum knows the realized losses before choosing queries. It is therefore a hindsight information benchmark, not an implementable online lower-bound policy. Every zero-error agent that terminates with a certificate must pay at least $\Ccert(x)$; the ratio $C/\Ccert$ measures its excess acquisition cost relative to this instance-specific baseline.

For candidate $j$, the \emph{upper frontier} is the smallest upper score obtainable by revealing at most $q$ realized losses; the \emph{lower frontier} is the largest lower score obtainable with that budget:
\[
\begin{aligned}
U_j^\star(q)&=\min_{|Q_j|\le q}U_j(Q_j),\\
L_j^\star(q)&=\max_{|Q_j|\le q}L_j(Q_j).
\end{aligned}
\]
These offline constructions use the realized values. \Cref{thm:cut} means that once these frontiers are known, the global optimum can be found by checking separating cuts and adding the smallest budget needed in each row. The computation optimizes evidence for membership, rather than the precision of all risk scores. For a proposed cut, each selected candidate must spend just enough queries to move its upper bound below that cut; each excluded candidate must spend just enough to lift its lower bound above it. Since different rows share no primitive, these row budgets can be added. The global optimum chooses the cheapest cut and combines the attaining row query sets.

\subsection{Why the ES frontiers are tractable}

For empirical ES, the useful variational form is
\begin{equation}
\ES_t(y)=\min_z\Big\{z+\frac1t\sum_s(y_s-z)_+\Big\}.
\label{eq:esdual}
\end{equation}
Consider the upper frontier. For a fixed threshold $z$, revealing coordinate $s$ reduces the positive-part sum by
$g_s(z)=(u_s-z)_+-(x_s-z)_+$.
The best $q$ queries at that threshold therefore have the $q$ largest gains. Their order changes only at $O(S^2)$ rational breakpoints, which can be enumerated. For the lower frontier, choosing revelations can change which observations enter the tail. A rank-selection dynamic program handles that switching; when $t$ is fractional, it also enumerates the coordinate receiving the boundary weight. Full constructions and bit-complexity arguments appear in \cref{supp:es}.

\begin{corollary}[Exact empirical-ES verification]
\label{cor:es}
For rational information-disjoint empirical-ES rows with unit exact-query costs, the complete upper and lower row frontiers, an attaining query set for every budget, and the hindsight optimum $\Ccert$ with an attaining certificate are computable in polynomial bit complexity, including instances in which revelation changes tail membership.
\end{corollary}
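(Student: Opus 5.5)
The plan has three stages: compute the upper frontiers, compute the lower frontiers, and then invoke the composition part of \cref{thm:cut}. Given all frontiers $U_j^\star(q)$, $L_j^\star(q)$ for $q=0,\dots,S$ together with attaining query sets, \cref{thm:cut} already says that $\Ccert$ is obtained by scanning the $O(mS)$ candidate cut keys, namely the selected upper frontier values with their IDs. For each cut we add, per row, the least budget that crosses it, and take the cheapest cut. That stage needs only polynomially many comparisons of rationals whose bit sizes are bounded by the frontier computations, so the real work is the two row frontiers.

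\emph{Upper frontier.} I would use the variational form \eqref{eq:esdual}. Since $U_j(Q)=\min_z\{z+\frac1t\sum_{s\notin Q}(u_s-z)_+ + \frac1t\sum_{s\in Q}(x_s-z)_+\}$, we can exchange the two minimizations:
\[
U_j^\star(q)=\min_z\Big\{z+\tfrac1t\sum_s(u_s-z)_+-\tfrac1t\max_{|Q|\le q}\sum_{s\in Q}g_s(z)\Big\}.
\]
Each $g_s(z)\ge0$ is piecewise linear with breakpoints at $x_s$ and $u_s$. The ranking of the $g_s$ changes only where two pieces cross, which gives $O(S^2)$ rational breakpoints. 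On each resulting interval the top-$q$ set is fixed, so the objective is convex piecewise linear in $z$ and is minimized at a breakpoint. We then evaluate every breakpoint and every $q$ and record the argmin $Q$. All numbers stay rationals of size polynomial in the input.

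\emph{Lower frontier.} We need $\max_{|Q|\le q}\ES_t(\ell[Q\leftarrow x])$. Here the min over $z$ does not commute favorably with the max, so instead I would use the primal form. $\ES_t$ is the maximum over weight vectors $w$ with $0\le w_s\le 1/t$ and $\sum_s w_s=1$ of $\sum_s w_s y_s$. An optimal $w$ puts weight $1/t$ on $\lfloor t\rfloor$ coordinates and the fractional remainder on one boundary coordinate. The joint maximum over $(Q,w)$ therefore becomes a choice, per coordinate, of a value $y_s\in\{\ell_s,x_s\}$ (where $x_s$ costs one query) and a role: full weight, boundary, or zero. Queries are useful only on weighted coordinates, so the problem reduces to the following: enumerate the boundary coordinate $b$ (or none, when $t$ is an integer), then choose $\lfloor t\rfloor$ tail coordinates maximizing $\sum\max(\ell_s,x_s)$ subject to spending at most $q$ queries. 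I would solve this with a knapsack-style dynamic program indexed by (coordinates processed, tail slots used, queries used). Tail coordinates are restricted to ranks consistent with $b$, and this restriction is why the paper speaks of rank selection. The DP table has $O(S^3)$ states for each $b$, and it returns an attaining $Q$ by backtracking.

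The main obstacle is justifying that exchange in the lower-frontier case: we must check that maximizing over weights and revelations jointly equals the true $L_j^\star$ even when revealing a value changes which coordinates are in the tail. This holds because $\ES_t(y)$ is a maximum over admissible $w$, so $\max_Q\max_w=\max_w\max_Q$ holds exactly; I expect this identity to settle the switching issue cleanly. The remaining care is bit complexity. All frontier values are averages of input rationals with denominator dividing $t$'s denominator times the input denominators (for the upper frontier, breakpoint $z$ values are input entries), so sizes remain polynomial. Combining these with \cref{thm:cut} yields the corollary.
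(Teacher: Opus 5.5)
Your proposal is correct and follows the paper's own proof: for each threshold $z$ it selects the $q$ largest gains $g_s(z)$ and enumerates the $O(S^2)$ rational breakpoints for the upper frontier, it uses the (coordinates processed, tail slots filled, queries used) dynamic program with enumeration of the fractional pivot for the lower frontier, and it applies the finite cut scan of \cref{thm:cut} for $\Ccert$. Your description of $\ES_t$ as a maximum over weights capped at $1/t$ is the same fact as the paper's ``sum of the $r$ largest values'' reduction, and it shows that no rank restriction relative to the pivot $b$ is needed; just make sure $b$ itself may be queried, as the paper's pivot enumeration allows.
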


The distinction between upper and lower frontiers matters. Revealing a large loss raises a lower bound, helping reject a candidate; revealing a small loss lowers an upper bound, helping retain one. ES can switch which scenarios occupy its worst tail after a revelation, so neither frontier is obtained safely by freezing the original tail. The exact algorithms account for this switching.

The corollary makes a nontrivial empirical comparison possible: the denominator is a \emph{computed minimum}, not a heuristic lower bound or the cost of full evaluation. It does not imply that a policy can identify the optimal queries without knowing the hidden values.

\subsection{Why hindsight cannot be matched uniformly online}

\begin{proposition}[Online boundary]
\label{prop:online}
There are instances with $n$ unresolved primitives and $\Ccert=1$ on which every deterministic zero-error online policy makes $n$ evaluations on some realization, and every randomized zero-error policy incurs expected cost at least $(n+1)/2$ on some realization.
\end{proposition}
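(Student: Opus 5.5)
We prove this with a "needle in a haystack" instance built inside the Cartesian model, so that \cref{thm:cut} applies. Take $k=1$ and two candidates. Candidate $1$ has a single scenario whose loss is known, so its score is a point $a$. Candidate $2$ has $n$ unresolved scenarios, each with public interval $[0,1]$, and is scored by the max. This is the spectral weight $\lambda_1=1$, or equivalently $\ES_1$. Choose $a\in(0,1)$ and give candidate $2$ the smaller ID; the tie rule plays no role here. The family of realizations is $x^{(p)}$ for $p=1,\dots,n$: coordinate $p$ of row $2$ equals $1$ and all other coordinates equal $0$. In every $x^{(p)}$ candidate $2$ has score $1>a$, so $D_1(x^{(p)})=\{1\}$. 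Querying $(2,p)$ lifts $L_2$ to $1$, and the cut $(a,\id_1)\prec(1,\id_2)$ of \cref{thm:cut} holds, so $\Ccert(x^{(p)})=1$. Any empty query set leaves $L_2=0<a$, so it cannot certify.

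Next we show that certification forces the needle to be found. Let $Q$ be any query set in row $2$ that misses $p$. All revealed values are $0$, so the lower corner of row $2$ has score $0$, giving $L_2(Q)=0<a$. By the necessity half of \cref{thm:cut}, $Q$ does not certify. A zero-error policy must therefore query coordinate $p$ before stopping. Equivalently, as long as $p$ is unrevealed, the completion with all zeros in row $2$ is compatible, and it flips the decision to $\{2\}$.

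The deterministic bound follows by an adversary argument. Every query answer before the needle is found is $0$, so a deterministic policy follows one fixed query sequence on all $x^{(p)}$ up to that point. Let the adversary place $p$ at the last coordinate in that order; the policy then makes $n$ evaluations. Candidate $1$'s coordinate is already known, and querying it gives no information, which only raises the cost.

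For randomized policies we apply Yao's principle with $p$ uniform on $\{1,\dots,n\}$. Any deterministic policy finds $p$ at the position where $p$ appears in its fixed order, so its expected cost is at least $\sum_{i=1}^n i/n=(n+1)/2$. Hence every randomized zero-error policy has expected cost at least $(n+1)/2$ on some $x^{(p)}$.

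The main obstacle is making the instance fit the paper's model. We need exact extrema, information-disjointness, and "$n$ unresolved primitives" counted only in row $2$. We also must check that an interval-valued query such as $(1,1)$ is not needed. If the model insists that every row have $S=n$ scenarios, we pad row $1$ with $n-1$ further coordinates that are already known to equal $a$.
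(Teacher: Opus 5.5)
Your proof is correct and follows the paper's argument. The shared steps are a one-hot family against a known comparison row, $\Ccert=1$ by querying the hot coordinate, an adversary that puts the hot coordinate last in the policy's all-zero query order, and averaging over the $n$ hot locations (Yao's principle) for the $(n+1)/2$ bound. The only difference is cosmetic: the paper uses an arbitrary spectrum on $[0,2/\lambda_1]^n$ against a known risk-one row, whereas you take $\lambda_1=1$ (that is, $\ES_1$) with $[0,1]$ boxes and a known score $a\in(0,1)$, which suffices for the existential claim.
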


The obstruction is a one-hot search problem: one hidden coordinate determines the comparison with a known row, and the public intervals do not distinguish its position. A hindsight verifier queries that coordinate immediately, but a zero-error online agent can find it last. This is the familiar closed-interval search phenomenon in explorable uncertainty \citep{ErlebachHoffmann2015}. It explains why we do not claim a universal competitive ratio against $\Ccert$. Instead, the purpose of prediction is to guide acquisition on actual instances without weakening exactness.
\section{CAS-PG: prediction buys information, not authority}
\label{sec:caspg}

CAS turns the exact stopping criterion into an agent by adding an acquisition policy. At every iteration, the agent computes the current risk bounds and tentatively selects the $k$ candidates with the smallest \emph{lower} keys. Two rows witness the unresolved boundary: the selected row with the largest upper key, and the excluded row with the smallest lower key. If these two keys are ordered correctly, the whole set is certified. Otherwise the agent needs another exact observation.

\Cref{fig:model} also distinguishes online query policies from separate risk/cost theory branches. The certificate evaluates actual returned values and valid intervals. The \emph{query policy} may use arbitrary predictions to decide which observation to buy; its provisional predictions are never substituted for oracle responses.

\subsection{Which observation comes next?}

\caspg{} uses a fixed row rule and a predicted within-row ordering (\cref{alg:caspg}). It first works on the selected boundary row while that row has unresolved coordinates; otherwise it works on the excluded boundary row. Within that row, it imagines revealing each unresolved loss at its predicted value and scores the resulting increase in the row's lower ES bound. It purchases the coordinate with the largest such increase. The pseudo-revelations are only scores for acquisition, not certified updates. On a common $[0,1]$ box, this ranks high predicted losses first: those are most likely to enter the ES tail. The upper ES endpoint often cannot decrease after one revelation while at least $t$ upper-bound coordinates remain, which motivates using the lower-endpoint effect.

An optional prior $p^0_{js}$ predicts losses from historical instance features. The agent can also adjust that prior using the evaluations already in its transcript. If $R_s$ is the set of rows queried in scenario $s$, the correction is
$e_s=\sum_{j'\in R_s}(x_{j's}-p^0_{j's})/(|R_s|+1)$,
with a fixed shrinkage constant of one; the prediction used for ranking is
$\hat p_{js}=\operatorname{clip}(p^0_{js}+e_s,[\ell_{js},u_{js}])$.
The correction shares evidence that scenario $s$ was unexpectedly costly for other candidates. The default \caspg{} uses both sources. \cas{PG-T} uses the interval midpoint as its prior and learns only from the transcript; \cas{PG-H} uses history but no transcript correction. The transcript-only variant is essential for comparing policies that begin with the same public information.

\subsection{Exactness does not depend on predictive quality}

\begin{proposition}[Prediction-independent exactness]
\label{prop:sound}
In the model of \cref{thm:cut} with empirical-ES rows, let the prior $p^0$ be arbitrary (any function of public information, possibly adversarial). For every realization $x$ in the public box, \caspg{}
(i) terminates after at most $mS$ evaluations;
(ii) returns $D_k(x)$;
(iii) stops at the first prefix of its own query sequence that certifies the decision.
Consequently $\Ccert(x)\le C_{\caspg}(x)\le mS$.
\end{proposition}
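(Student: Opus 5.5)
The argument treats \caspg{} as an arbitrary query ordering followed by the deterministic one-cut test of \cref{thm:cut}. The point is that the prior $p^0$ and the transcript correction $e_s$ only choose \emph{which} unresolved coordinate is purchased next. They never enter the bounds $L_j(Q_j)$, $U_j(Q_j)$ or the stopping test, which are computed from oracle responses and public intervals alone. So it suffices to prove (i)--(iii) for every sequence of distinct queries consistent with the row rule, with no assumption on the predictor.

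\textbf{Step 1 (termination, (i)).} Each iteration that does not stop buys one coordinate that is still unresolved, so no coordinate is bought twice and at most $mS$ evaluations occur. I must check that a non-certified state always offers some unresolved coordinate in one of the two boundary rows. Suppose the test fails but both boundary rows are fully resolved. Then both keys are exact: $U_i=L_i=\Risk(x_i)$ and $U_j=L_j=\Risk(x_j)$. Because the tentative set consists of the $k$ smallest lower keys and $j$ is excluded, $(L_i,\id_i)\prec(L_j,\id_j)$. Hence $(U_i,\id_i)\prec(L_j,\id_j)$, which is exactly the boundary comparison, so the test would have passed. Once all $mS$ coordinates are resolved, every row is exact, the cut holds trivially, and the loop stops.

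\textbf{Step 2 (correctness, (ii)).} When the agent stops, its tentative set $T$ satisfies $\max_{i\in T}(U_i,\id_i)\prec\min_{j\notin T}(L_j,\id_j)$. The sufficiency direction of \cref{thm:cut}, applied with $T$ in place of $A$, shows that $T$ is the top-$k$ set of every completion in $B_Q(x)$. The realization $x$ itself lies in $B_Q(x)$, so $T=D_k(x)$. Checking only the two witness rows is enough, because they attain the max and min in the cut.

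\textbf{Step 3 (first certifying prefix, (iii)).} This is the step I expect to be the main obstacle. One must show that whenever a prefix $Q$ certifies $D_k(x)$ in the sense of \cref{def:certificate}, the agent's test already passes at $Q$. By the necessity direction of \cref{thm:cut}, certification means the cut holds for $A=D_k(x)$. I then need that the agent's tentative set, the $k$ smallest lower keys, equals $A$. Let $a\in A$ and $b\notin A$. The cut gives $(L_a,\id_a)\preceq(U_a,\id_a)\prec(L_b,\id_b)$, so every element of $A$ has a smaller lower key than every element outside $A$, and $T=A$. Its boundary rows are then the maximizer and minimizer in the cut, and the test passes. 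Because the test is evaluated after every query, and also before the first one, the agent stops at the first certifying prefix.

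\textbf{Consequence.} The final query set certifies $D_k(x)$, so its cost is at least $\Ccert(x)$ by \eqref{eq:ccert}. Step 1 gives the upper bound $mS$.
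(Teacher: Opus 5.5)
Your proposal is correct and follows the paper's proof essentially step for step. The paper uses the same four pieces: the prior only selects a cell inside an already chosen boundary row; the stop is sound because of the upper-key/lower-key chain; at any certifying prefix, the necessity direction of \cref{thm:cut} forces the $k$ smallest lower keys to equal $D_k(x)$; and progress follows because two fully evaluated boundary rows would already pass the test. The paper writes the sufficiency chain inline instead of citing \cref{thm:cut} with $T$ in place of $A$. Your citation is legitimate, since that direction never uses $A=D_k(x)$.
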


\begin{proof}
At every iteration the current box contains the realized matrix $x$: a coordinate is replaced by a singleton only after the exact oracle reveals it. The invariant does not depend on the prior or the transcript-based prediction.

If the stopping test passes for the tentative set $A$ of $k$ smallest lower keys, then for every compatible completion $v$, each selected risk key lies below its upper key, and each excluded risk key lies above its lower key. The one-cut test therefore gives $D_k(v)=A$, in particular $A=D_k(x)$.

Conversely, suppose the current transcript already certifies the realized membership set $A'=D_k(x)$. By \cref{thm:cut}, all selected upper keys precede all excluded lower keys. Selected lower keys are at most their upper keys, so the $k$ smallest current lower keys are exactly $A'$. The algorithm consequently chooses $A=A'$ and its stopping test passes. As the test precedes each query, this is the first certifying prefix of the chosen query sequence.

Finally, if the stopping test fails, both boundary rows cannot already be fully evaluated. Otherwise their exact lower and upper keys would respect the strict lexicographic order separating the tentative selected set from its complement, so the test would pass. The selected boundary row, if unresolved, or otherwise the excluded boundary row therefore has a new cell to query. Each iteration buys a distinct cell and there are only $mS$ cells. Termination, exactness, earliest-prefix stopping, and the defining lower bound $\Ccert(x)\le C_{\caspg}(x)$ follow.
\end{proof}

The guarantee is \emph{about the decision}, not the query budget. The interval $[\Ccert,mS]$ is the general cost bound; \cref{prop:online} rules out an online bound close to hindsight on every instance. Predictive quality enters only the empirical question of acquisition efficiency.
\begin{algorithm}[t]
\caption{\caspg{} for ES top-$k$ under exact unit-cost evaluation}
\label{alg:caspg}
\begin{algorithmic}[1]
\Require public boxes $[\ell_{js},u_{js}]$; $k$; tail mass $t$; evaluation oracle; optional prior $p^0_{js}$ (default: interval midpoint)
\Ensure the exact top-$k$ set $D_k(x)$
\Loop
  \State compute exact endpoints $L_j,U_j$ of every row (rational arithmetic)
  \State $A\gets$ the $k$ smallest keys $(L_j,\id_j)$
  \State $i^\star\gets\arg\max_{i\in A}(U_i,\id_i)$;\quad $j^\star\gets\arg\min_{j\notin A}(L_j,\id_j)$
  \State \textbf{if} $(U_{i^\star},\id_{i^\star})\prec(L_{j^\star},\id_{j^\star})$ \textbf{then return} $A$
  \State $r\gets i^\star$ if row $i^\star$ has an unevaluated cell, else $j^\star$
  \State $\hat p_{rs}\gets\operatorname{clip}\big(p^0_{rs}+e_s,\,[\ell_{rs},u_{rs}]\big)$ for unevaluated $s$
  \State $s^\star\gets\arg\max_s\big[L_r(\ell_{rs}\!\leftarrow\!\hat p_{rs})-L_r\big]$ (ties: larger $\hat p_{rs}$, then ID)
  \State evaluate $(r,s^\star)$; set $\ell_{rs^\star}=u_{rs^\star}=x_{rs^\star}$
  \State update the scenario effect $e_{s^\star}$ from the residual $x_{rs^\star}-p^0_{rs^\star}$
\EndLoop
\end{algorithmic}
\end{algorithm}
\section{Public benchmarks: exact decisions from fewer queries}
\label{sec:public}

\begin{figure}[tbp]
\centering
\resizebox{0.98\textwidth}{!}{%
\begin{tikzpicture}
\begin{groupplot}[
  group style={group size=2 by 1, horizontal sep=0.9cm, y descriptions at=edge left},
  casbars, /pgf/bar width=5.4pt, width=0.47\textwidth, height=4.25cm,
  ymin=-0.65, ymax=8.65, ytick={0,...,8},
  yticklabels={Full exact ($mS$), Uniform over all cells, \cas{W}, \cas{F}, Uniform in witness, \cas{PD}, \cas{PG-T} (no history), \caspg{}, Hindsight $\Ccert$},
]
\nextgroupplot[title={OPENML-WEKA-2017 ($30\times105$)}, xmax=4350, xtick={0,1000,2000,3000}]
\addplot[fill=cascharcoal, draw=cascharcoal] coordinates {(3150,0) [3150 ($5.69\times$)]};
\addplot[fill=casgrey, draw=casgrey] coordinates {(3144,1) [3144 ($5.68\times$)] (3074,2) [3074 ($5.55\times$)] (3039,3) [3039 ($5.49\times$)] (2006,4) [2006 ($3.62\times$)] (1872,5) [1872 ($3.38\times$)]};
\addplot[fill=casred, draw=casred] coordinates {(709,6) [\textbf{709 ($\mathbf{1.28\times}$)}] (630,7) [\textbf{630 ($\mathbf{1.14\times}$)}]};
\addplot[pattern=north east lines, pattern color=cascharcoal, draw=cascharcoal] coordinates {(554,8) [554 ($1.00\times$)]};
\draw[cascharcoal, densely dotted] (axis cs:554,-0.65) -- (axis cs:554,8.65);
\nextgroupplot[title={CSP-Minizinc-Obj-2016 ($22\times100$)}, xmax=3050, xtick={0,500,1000,1500,2000}]
\addplot[fill=cascharcoal, draw=cascharcoal] coordinates {(2200,0) [2200 ($2.64\times$)]};
\addplot[fill=casgrey, draw=casgrey] coordinates {(2169,1) [2169 ($2.60\times$)] (1966,2) [1966 ($2.36\times$)] (2191,3) [2191 ($2.63\times$)] (1661.5,4) [1661.5 ($1.99\times$)] (1699,5) [1699 ($2.04\times$)]};
\addplot[fill=casred, draw=casred] coordinates {(1144,6) [\textbf{1144 ($\mathbf{1.37\times}$)}] (981,7) [\textbf{981 ($\mathbf{1.18\times}$)}]};
\addplot[pattern=north east lines, pattern color=cascharcoal, draw=cascharcoal] coordinates {(834,8) [834 ($1.00\times$)]};
\draw[cascharcoal, densely dotted] (axis cs:834,-0.65) -- (axis cs:834,8.65);
\end{groupplot}
\end{tikzpicture}
}
\Description{Two horizontal bar charts, one per public benchmark, of the number of exact evaluations each certified method buys before the decision certificate closes. On OPENML-WEKA-2017: full exact 3150, uniform over all cells 3144, CAS-W 3074, CAS-F 3039, uniform in witness 2006, CAS-PD 1872, CAS-PG-T 709, CAS-PG 630, hindsight optimum 554. On CSP-Minizinc-Obj-2016: 2200, 2169, 1966, 2191, 1661.5, 1699, 1144, 981 and 834.}
\caption{Exact evaluations bought before the decision certificate closes (ES top-3); in parentheses, the multiple of the hindsight optimum $\Ccert$ (dotted line). Every method returns the exact full-information set, so the bars differ only through the order of purchases. Red bars: \cas{PG-T} uses transcript information only, whereas \caspg{} additionally uses historical outcomes from other instances. Grey bars use no predictor. Uniform baselines are medians over 30 seeds.}
\label{fig:public-cost}
\end{figure}

\subsection{What the experiments measure}

We evaluate one fixed risk-sensitive selection task on each of two public ASlib algorithm-selection matrices \citep{BischlEtAl2016}. A \emph{candidate} is an algorithm, a \emph{scenario} is a benchmark instance, and a cell contains published performance of that algorithm on that instance. The agent must identify the three algorithms with smallest ES \emph{across instances}. This differs from ordinary algorithm selection, which chooses one algorithm separately for each instance.

The complete matrices sit behind a hidden-value replay oracle. Revealing a cell counts as one exact algorithm--instance evaluation, but does not execute a new algorithm run. Thus our measured costs are \emph{exact-value query counts}, not observed savings in training or solver wall time. We use OPENML-WEKA-2017 (30 classifiers, 105 datasets, loss $1-$accuracy, ES level $0.10$, tail mass $10.5$) and CSP-Minizinc-Obj-2016 (22 solvers, 100 instances, published normalized objective, timeouts as 1, ES level $0.50$, tail mass 50). In both, $k=3$, the public unevaluated boxes are $[0,1]$, query costs are one, and there is no decision-cut tie. Published decimal strings are parsed into exact rationals; floating point is used only for deciding the query order.

\paragraph{Equal-information comparison.}
Our primary baseline comparison uses \cas{PG-T}, which has only the public boxes and its own query transcript. The non-predictive policies have exactly those sources too. They share the same certificate but select queries differently: \cas{W} targets the widest row bound, \cas{F} uses frontier distance, and \cas{PD} raises dual weight on a universal witness before buying a tight primitive. Random controls query either any unresolved cell or a cell within the current frontier witness. An additional control uses CAS-PG's row rule but orders cells uniformly at random within that row.

\paragraph{Historical-data comparison.}
The full \caspg{} also uses a regression prior trained on public instance meta-features and complete outcomes from \emph{other} instance folds. For each ASlib ten-fold holdout, ridge regression or a 300-tree random forest is selected using inner cross-validation on the training folds, with no outcome from the held-out instance. The historical outcome collection and the one-off predictor fit are uncharged by the replay. This is extra side information, so its improvement is reported separately rather than treated as an equal-information gain.

The task matrices, policy family, comparison set and acceptance thresholds ($C\le1.50\,\Ccert$ on OPENML and $C\le1.75\,\Ccert$ on MiniZinc) were recorded in a protocol file before the first CAS-PG run. The policy family had been developed after pilot values of $\Ccert$ on the matrices were known, with no policy parameter tuned on them; file times alone cannot independently establish the claimed pre-run ordering. All non-random policy runs are deterministic. The full protocol and audit qualifications appear in \cref{supp:caspg}.

\subsection{How much does ordering save?}

The offline minima are $\Ccert=554$ cells on OPENML and 834 on MiniZinc, respectively $17.6\%$ and $37.9\%$ of their full matrices. \cas{PG-T} buys 709 and 1144 cells ($1.28\times$ and $1.37\times\Ccert$). By comparison, the best tested deterministic non-predictive certified schedulers buy 1872 and 1699 cells. The randomized within-row control has medians of 1905.5 and 1641.5 over 30 seeds. \emph{This is the equal-information result}: information learned during acquisition orders subsequent queries sufficiently well to approach the hindsight minimum on both matrices.

Adding the cross-fitted historical prior reduces the default \caspg{} counts to \textbf{630 of 3150} on OPENML and \textbf{981 of 2200} on MiniZinc (\cref{fig:public-cost,tab:public}). These are $20.0\%$ and $44.6\%$ of exhaustive evaluation, or $1.14\times$ and $1.18\times\Ccert$. Both satisfy the pre-declared thresholds. \caspg{} has the lowest exact-query cost among the methods compared in Table 1 on the MiniZinc task; on OPENML, the pre-declared variants \cas{PG-H} (623) and \cas{PG-U} (626) each use slightly fewer queries than the default (630). The main effect is that learning influences \emph{when the certificate can close}, not which candidates it accepts.

\subsection{What explains the remaining gap?}

Replacing the within-row prediction by a uniform order raises the median query count to 1905.5 on OPENML and 1641.5 on MiniZinc, close to the uniform-in-witness control. The benefit on these matrices is therefore driven by which scenarios are evaluated within a blocking row. With history alone, \cas{PG-H} takes 623 and 1043 queries; the transcript correction reduces MiniZinc to 981 but increases OPENML from 623 to 630. This separates a large benefit from predictive ordering from the mixed incremental benefit of combining two prediction sources.

The queries not saved also have an identifiable structure (\cref{tab:gap}). On OPENML, the three selected rows cost 315 queries versus 314 in hindsight; much of the remaining gap comes from the seventh-ranked excluded row, where the policy spends 69 queries against the hindsight minimum of 10 because its predicted worst scenarios differ from the realized tail. On MiniZinc, the selected rows cost 296 versus 217. The hindsight set can leave 23--31 cells of each selected row unevaluated when their realized losses sit at or near the public upper endpoint 1. The policy lacks a general rule for safely skipping such predicted-saturated cells.

Two alternative predictor seeds and two fold assignments produce 622--635 queries on OPENML and 981--1034 on MiniZinc for the default policy. Permuting the historical predictions across scenarios or candidates raises its cost to between $1.17$ and $1.43\times\Ccert$, but never changes the certified answer. The predicted-top-$k$ row-rule variant \cas{PG-U} is less robust under these perturbations (up to about $2.1\times\Ccert$ on OPENML).

The historical model is not being claimed as a state-of-the-art algorithm selector. As a plain per-instance selector on the challenge split \citep{LindauerEtAl2019}, the same pipeline is slightly worse than the single best algorithm on OPENML. Its value here is to order queries, not to act as a final selector.

\paragraph{Independent audit.}
An independently implemented verifier reproduced both matrices, every reported evaluation count, the exact top-three sets, and $\Ccert=554/834$. It checked that 18 deterministic and 120 random transcripts first certify at their last query. Across 576 hidden-truth perturbation tests, the next query remained unchanged when unevaluated values were altered.

\paragraph{Compute and evidence scope.}
The chooser uses 0.2--0.3 CPU seconds per run, with complete certification taking about 2 seconds on one thread; fitting the predictor once takes 21--28 seconds. These are overhead measurements on replayed data. The empirical claim still consists of \emph{two matrices, one deterministic run per method}, one ES level each, unit costs and $[0,1]$ boxes. In MiniZinc, 1528 of 2200 values are exactly 0 or 1, so saturation affects certificate behavior.

\begin{table}[tbp]
\caption{Exact evaluations until certification (ES top-3) and their multiple of $\Ccert$. All methods return the exact full-information set. $\circ$: no historical side information. $\dagger$: median over 30 seeds. $\ddagger$: control run by the independent audit.}
\label{tab:public}
\small
\setlength{\tabcolsep}{3.6pt}
\begin{tabular}{@{}lrrrr@{}}
\toprule
 & \multicolumn{2}{c}{OPENML-WEKA} & \multicolumn{2}{c}{CSP-Minizinc}\\
Method & evals & $\times\Ccert$ & evals & $\times\Ccert$\\
\midrule
Full exact ($mS$) & 3150 & 5.69 & 2200 & 2.64\\
$\circ$ Uniform over all cells$^\dagger$ & 3144 & 5.68 & 2169 & 2.60\\
$\circ$ \cas{W} & 3074 & 5.55 & 1966 & 2.36\\
$\circ$ \cas{F} & 3039 & 5.49 & 2191 & 2.63\\
$\circ$ Uniform in witness$^\dagger$ & 2006 & 3.62 & 1661.5 & 1.99\\
$\circ$ Row rule, uniform order$^{\dagger\ddagger}$ & 1905.5 & 3.44 & 1641.5 & 1.97\\
$\circ$ \cas{PD} & 1872 & 3.38 & 1699 & 2.04\\
\midrule
$\circ$ \cas{PG-U}, transcript only & 792 & 1.43 & 1126 & 1.35\\
$\circ$ \cas{PG-T} & 709 & 1.28 & 1144 & 1.37\\
\cas{PG-H} & 623 & 1.12 & 1043 & 1.25\\
\cas{PG-U} & 626 & 1.13 & 989 & 1.19\\
\textbf{\caspg{}} & \textbf{630} & \textbf{1.14} & \textbf{981} & \textbf{1.18}\\
\midrule
Hindsight $\Ccert$ (offline) & 554 & 1.00 & 834 & 1.00\\
\bottomrule
\end{tabular}
\end{table}
\begin{table}[tbp]
\caption{Where queries are spent and how historical priors change cost. Top: row groups, hindsight versus \caspg{}. Bottom: two audit draws per prior; every output remains exact.}
\label{tab:gap}
\small
\setlength{\tabcolsep}{3.6pt}
\begin{tabular}{@{}lrrrr@{}}
\toprule
 & \multicolumn{2}{c}{OPENML-WEKA} & \multicolumn{2}{c}{CSP-Minizinc}\\
Row group & $\Ccert$ & \caspg{} & $\Ccert$ & \caspg{}\\
\midrule
Three selected rows & 314 & 315 & 217 & 296\\
Excluded rows (27 / 19) & 240 & 315 & 617 & 685\\
\midrule
Prior given to \caspg{} & \multicolumn{2}{c}{evals} & \multicolumn{2}{c}{evals}\\
\midrule
Other predictor seeds & \multicolumn{2}{c}{634 / 622} & \multicolumn{2}{c}{981 / 988}\\
Random fold assignments & \multicolumn{2}{c}{631 / 635} & \multicolumn{2}{c}{1034 / 1032}\\
Permuted across instances & \multicolumn{2}{c}{753 / 647} & \multicolumn{2}{c}{1177 / 1177}\\
Permuted across candidates & \multicolumn{2}{c}{738 / 743} & \multicolumn{2}{c}{1171 / 1195}\\
\bottomrule
\end{tabular}
\end{table}
\section{What the certificate layer establishes elsewhere}
\label{sec:energy}

The benchmark experiment tests how prediction orders queries inside the core model. The next two studies instead test the \emph{certificate layer} with non-predictive schedulers. They are separate sources of evidence and use different paid units; neither is an empirical evaluation of \caspg{} on a new oracle.

\subsection{Exact recourse solves in energy scheduling}

The synthetic energy problem chooses $k$ of $A$ generation schedules using ES over the worst third of $S$ service scenarios. Here one queried action--scenario outcome is a genuine exact finite-horizon battery-recourse dynamic-program solve. Without requesting that solve, the public upper bound is the no-battery objective and the lower bound is a per-step relaxation ignoring stock coupling. The non-predictive \cas{PD} policy buys queries until the same membership certificate closes.

\paragraph{Model-equivalence check.}
Before charging one unit for each query, we verified that (i) one first-time query makes one exact recourse solve; (ii) an independent DP reproduces the paid value; (iii) public bounds are derived without hidden responses; (iv) each primitive costs one solve; and (v) full revelation matches direct exact schedule selection. All five checks pass. This justifies comparing the certificate loop against $AS$ exact solves on this oracle, rather than assuming the point-query model applies automatically.

Across $A\in\{8,16,32\}$, $S\in\{32,64,128\}$, $k\in\{1,2\}$ and eight seeds (144 scale-grid cells), the certificate avoids \textbf{93.0\%} of the sum of exhaustive exact recourse solves. The median and mean cell avoid $92.9\%$ and $91.2\%$; the nine $(A,S)$ families have median avoidance of $82.8$--$96.0\%$, and the worst cell avoids $71.0\%$. Across 148 completed energy rows, including four small exact-ratio cells, there are zero decision errors. A separately attempted scale-grid exact-ratio run timed out and contributes no certificate fields. On the four small cells ($S=16$), \cas{PD} takes between 2 and 11 times $\Ccert$: large solve avoidance is not evidence of near-optimal acquisition. The $93.0\%$ measure excludes public-bound construction and total runtime.

\subsection{Stress tests: smaller margins and looser bounds}
\label{sec:stress}

Three controlled quadratic tasks (12 candidates, 64 scenarios, $k=3$, one fixed instance per regime) vary the decision margin and curvature while retaining certified loss intervals. This asks whether an acquisition policy continues to certify or falls back safely as screening becomes less informative. The policies are \cas{W}, \cas{F} and \cas{CI} (certificate-impact ordering), not \caspg{}. The paid unit is one distinct candidate--scenario primitive in the full-exact reference, so the denominators are 4480 for benign/high curvature and 4224 for small margin, not simply $mS$.

On the benign task \cas{W} and \cas{F} require 99 and 128 exact queries, versus 250 for scenario-only screening and 4480 for candidate-only or full evaluation. At a small margin their counts rise to 191 and 192 against 257 and 4224. Under high curvature \cas{F}, \cas{CI} and \cas{W} use 576, 640 and 760 of 4480 queries; one-sided baselines were not run in that regime (\cref{fig:matched-work}). \emph{The invariant is the exact answer, not a uniformly small budget}: weaker bounds can require more exact work or the full fallback, but never justify an uncertified decision.

Separate diagnostic tests in \cref{supp:protocol} assess low-rank initialization. PCA and randomized SVD \citep{HalkoEtAl2011} can help produce initial \emph{certified} intervals; the interval certificate, not PCA/SVD, settles two benign geometry cells without queries. A hand-built rank-one PCA case retains $99.990001\%$ of variance yet misses the decisive candidate (regret 5 in aggregate risk-score units).

Predictive methods also lack stopping authority. In controlled Merton regimes \citep{Merton1976}, delta, second-order, XGBoost \citep{ChenGuestrin2016} and MLP surrogates yield outputs that may be correct or incorrect but have no certificate. These tests support the separation of prediction from stopping, not a claim that a particular surrogate is uniformly inferior.
\begin{figure}[tbp]
\centering
\begin{tikzpicture}
\begin{axis}[
  casbars, /pgf/bar width=4.6pt, width=0.80\columnwidth, height=4.9cm,
  xmax=22.5, xtick={0,5,10,15}, xlabel={share of exhaustive exact evaluations (\%)},
  ymin=-0.7, ymax=10.3, ytick={0,1,2,3,4.3,5.3,6.3,7.6,8.6,9.6},
  yticklabels={Benign: \cas{W}, \cas{CI}, \cas{F}, scenario-only, Small margin: \cas{W}, \cas{F}, scenario-only, High curvature: \cas{F}, \cas{CI}, \cas{W}},
]
\addplot[fill=casred, draw=casred] coordinates {(2.21,0) [99] (2.79,1) [125] (2.86,2) [128] (4.52,4.3) [191] (4.55,5.3) [192] (12.86,7.6) [576] (14.29,8.6) [640] (16.96,9.6) [760]};
\addplot[fill=casgrey, draw=casgrey] coordinates {(5.58,3) [250] (6.08,6.3) [257]};
\end{axis}
\end{tikzpicture}
\Description{Horizontal bar chart of exact candidate--scenario evaluations before certification on three quadratic stress tasks. Benign: CAS-W 99, CAS-CI 125, CAS-F 128, scenario-only 250, of 4480. Small margin: CAS-W 191, CAS-F 192, scenario-only 257, of 4224. High curvature: CAS-F 576, CAS-CI 640, CAS-W 760, of 4480.}
\caption{Controlled quadratic stress tasks: exact evaluations before certification (bar labels) as a share of the exhaustive denominator (4480; 4224 for small margin). Certified cost grows from 2\% to 17\% as the margin shrinks and curvature loosens the bounds, and the returned set stays exact. Candidate-only and full evaluation use the full reference budget.}
\label{fig:matched-work}
\end{figure}
\section{Extensions and limits of the certified model}
\label{sec:spectral}

The stopping test of \cref{thm:cut} depends on valid, attained row endpoints, not on the particular tail-risk spectrum. For spectra other than ES, however, finding the cheapest query set that tightens those endpoints can be harder. We state three distinct routes below to show what the certificate framework permits. The full statements, algorithms, assumptions and proofs are in \cref{supp:coarsening,supp:lower,supp:structural,supp:ptas}. \emph{These are extensions of the offline certificate theory, not performance results for \caspg{}.}

\subsection{Three routes beyond unit-cost empirical ES}

\paragraph{Certify a coarsened spectrum (\cas{SC}).}
A spectrum can be replaced by its block-average weights on $K$ contiguous ranks. Instead of assuming that this approximation preserves the choice, \cas{SC} encloses its error over the entire current candidate box. For a contiguous partition $\mathcal P$ the spectral residual $\Omega_{\mathcal P}(y)=\Risk_\lambda(y)-\Risk_{\widetilde\lambda^{\mathcal P}}(y)$ satisfies
\[
0\le\widehat\alpha_{j,\mathcal P}\le\Omega_{\mathcal P}(y)
\le\beta_{j,\mathcal P}\qquad(y\in B_j).
\]
The rational block corrections, including the strengthened lower bound, are proved in \cref{supp:coarsen-thm}. Optimal candidate-specific partitions take $O(S^2\log S+KS^2)$ rational operations after sorting, and $K=S$ returns the original spectrum exactly. The least block count that \emph{this procedure} needs to certify a decision, $K_{\rm dec}^{\sharp}$, can equal $S$; it is not an intrinsic information dimension. A sufficient current-box condition is $(w_i+w_j)h_K<\gamma_{ij}$, where $w$ is maximum coordinate width, $h_K$ the best $K$-block half-$\ell_1$ error and $\gamma_{ij}$ the exact endpoint margin.

On the computed compression population at relative width $1/2$, median $K_{\rm dec}^{\sharp}$ is 2 for geometric spectra at $S=32,64,128$ and 1 for linear spectra at those sizes and irregular spectra at $S=32,64$. With $K=2$, candidate-aware segmentation certifies $73\%$ of 200 matched cells, versus $35\%$ for uniform ranks and $26\%$ for each of two static spectrum-only rules. At $S=128$, coverage is incomplete following memory pressure ($62.5\%,37.5\%,0\%$ across families); these are conditional certificate statistics, \emph{not measured runtime acceleration}.

\paragraph{Exploit row structure (\cas{SV}).}
For any explicit rational coherent spectrum, the \emph{lower} row frontier under unit query budgets is computable in polynomial time. The upper frontier has a fixed-band $S^{O(K)}$ algorithm, which is XP in the number of spectral bands. Exact full verification is also available when each selected row's mutable intervals split into value-separated common-anchor components (an $O(S^4)$ all-budget algorithm), or when the row has $d_{\rm ep}$ distinct endpoint pairs (an $O((d_{\rm ep}+1)2^{d_{\rm ep}}S^2\log S)$ algorithm, FPT in $d_{\rm ep}$). One-cut composition then returns the \emph{exact} hindsight certificate and a witness. Small-scale independent oracle checks at $S\le8$ have zero mismatches.

These are structural algorithms, not a general complexity classification. The unrestricted coherent-spectral upper frontier, unrestricted exact verifier and FPT versus XP in spectral band count remain \OPEN; general polyhedral-risk hardness does not settle the spectral case.

\paragraph{Price queries without changing the decision (\cas{PT}).}
With information-disjoint exact point queries, explicit rational coherent spectra, fixed ID ties and strictly positive rational prices drawn from at most a fixed $c$ classes, an offline approximation scheme returns an original-coordinate certificate satisfying
\[
\Ccert\le C_{\rm alg}\le (1+\varepsilon)\Ccert
\]
for fixed rational $0<\varepsilon<1$ in bit complexity $(S+1)^{O(c/\varepsilon)}\poly(m,L)$. This is a PTAS for fixed $c$ and only XP jointly in $(c,1/\varepsilon)$: it is not FPT, EPTAS or FPTAS. Free primitives are preprocessed. The approximation concerns certificate \emph{price}; membership remains exact. At $S=1$ the problem reduces to minimum-weight-basis verification on a uniform matroid \citep{DiwanEtAl2026}. The scheme is a hindsight result, not a cost guarantee for an online predictor.

\subsection{Shared primitives and interval-valued financial oracles}
\label{sec:finance}

FRTB-inspired hedge screening asks for candidates $h_j$ that maximize the reduction in scenario-based portfolio risk $K(P)-K(P+h_j)$ \citep{Basel2019}. Our controlled financial workload is not a production Basel FRTB calculation. It differs from the core point-query model in two decisive ways: hedge packages \emph{share} leg--scenario pricing primitives, and the nonlinear Merton pricer \citep{Merton1976} returns certified \emph{intervals} rather than atomic exact losses. We therefore verify correctness directly using interval containment and an exact fail-closed fallback. We do not transfer $\Ccert$, the online ratios, or the priced certificate PTAS to this workload, and \caspg{} was not run on it.

On nine cells ($S\in\{32,64,128\}$ and 12, 24 or 48 hedge candidates), the second-order Taylor, certified mixture-truncation and outward-rounded Merton enclosure produces the same selected membership as full evaluation. Naive package--shock calls divided by distinct shared primitives are $2.80\times$, $7.60\times$ and $17.20\times$ for the three candidate counts ($9.2\times$ pooled). Reusing one cache across candidate-count cells at fixed $S$ requires exactly $3.00\times$ fewer fresh calls than independent caches. These are opportunities available to any caching evaluator, not \emph{certificate-caused} savings.

Repeated cold-cache timings of three scaled cells show no per-cell wall-clock speedup (each paired lower 95\% confidence limit is below one); one earlier small cell has a marginal $1.05\times$ ratio. A cross-cell sharing timing test is still missing.

\paragraph{Scope summary.}
The public-query efficiency ratios concern two exact, disjoint, unit-cost empirical-ES replay tasks. Energy and quadratic experiments use non-predictive query ordering; the general-risk and price results require their own assumptions; finance uses a different shared, interval-query information model. Their work-count denominators cannot be pooled (\cref{sec:evidence-units}). A prediction-error-dependent online cost guarantee and a safe policy for skipping endpoint-saturated scenarios remain open.

\subsection{Evidence units and denominators}
\label{sec:evidence-units}

The studies measure different forms of expensive work. The reported reductions below are \emph{not} combined into a single speedup.
\begin{table}[htbp]
\centering\small
\setlength{\tabcolsep}{4pt}
\caption{Paid units and claim scope across experimental studies.}\label{tab:evidence-units}
\begin{tabular}{@{}p{0.16\textwidth}p{0.28\textwidth}p{0.43\textwidth}@{}}
\toprule
\textbf{Study} & \textbf{Paid unit / denominator} & \textbf{What the evidence establishes}\\
\midrule
ASlib replay & algorithm--instance value; $mS$ and $\Ccert$ & \caspg{}, \cas{PG-T} and certified controls: exact membership and query cost\\
Native energy & exact recourse solve; $AS$ & \cas{PD}: exact schedule and solve avoidance, not runtime\\
Quadratic stress & distinct candidate--scenario primitive; full exact & \cas{W}, \cas{F}, \cas{CI}: certificate or exact fallback\\
Financial pricing & distinct leg--scenario primitive; fresh pricer calls & exact membership and shared-cache accounting; no certified latency gain\\
\bottomrule
\end{tabular}
\end{table}

\section{Related work}
\label{sec:related}

\paragraph{Paid information under uncertainty.}
Explorable uncertainty studies what to reveal when values lie in known intervals \citep{Feder2003Median,ErlebachHoffmann2015,HalldorssonDeLima2021}. Merino and Soto analyze non-adaptive universal query sets \citep{MerinoSoto2019}; Diwan et al. optimize realized minimum-weight-basis verification \citep{DiwanEtAl2026}; and D\"urr et al. study inclusion/exclusion thresholds under a different quantifier structure \citep{DurrEtAl2024}. Megow and Schl\"oter examine uncertain set selection with additive values \citep{MegowSchloeter2023}, whereas Schl\"oter's resource augmentation relaxes solution quality \citep{Schloeter2025}. Here the value of a single selectable candidate is an ordered aggregate of many hidden scenario outcomes, and membership must remain exact.

\paragraph{Algorithms with predictions.}
Learning-augmented algorithms seek cost or performance guarantees that improve with good predictions while remaining robust to poor ones \citep{LykourisVassilvitskii2021,MitzenmacherVassilvitskii2022}. Prediction-guided query policies have also been studied for uncertain minimum spanning trees \citep{ErlebachEtAl2022}. Our invariant is zero-error membership for every prediction; the measured ratio to a computable realized hindsight certificate is not a consistency--robustness bound in prediction error.

\paragraph{Algorithm identification and tail risk.}
Classical algorithm selection predicts which algorithm to use on each instance \citep{Rice1976,BischlEtAl2016,LindauerEtAl2019}. We instead select one set of algorithms ranked by tail loss across all instances. Stochastic best-arm and top-$k$ identification, including tail-risk objectives, yield probabilistic sampling guarantees \citep{AgrawalKoolenJuneja2021,KalyanakrishnanEtAl2012}; ES simulation likewise supports statistical confidence intervals \citep{LanNelsonStaum2010}. Our uncertainty sets are deterministic and require invariance under every feasible completion.

\paragraph{Ordered risk aggregation.}
OWA and spectral risk are established aggregation frameworks \citep{Yager1988,Acerbi2002,KasperskiZielinski2015}. Chassein et al. approximate OWA-based optimization by aggregating scenarios with quality loss \citep{ChasseinEtAl2020}; Baak et al. extend OWA to interval-supported coefficients \citep{BaakEtAl2024}. Our coarsening extension retains a certified residual interval for the \emph{original} risk rule; more centrally, CAS combines query costs, ordered risk, exact membership certificates and predicted acquisition in one agent.

\section{Conclusion}
\label{sec:conclusion}

A costly selection task need not determine every candidate's risk score to know its exact best set. Under the specified disjoint exact-query model, one cut between selected upper and excluded lower keys certifies top-$k$ membership; for empirical ES, the least certifying information in hindsight can itself be computed exactly. \caspg{} uses prediction to approach that information benchmark while leaving the certificate in full control of stopping. Its two public-matrix results and the separate energy recourse study show why this division is practically meaningful, while the spectral, priced and financial extensions delineate where distinct proofs or information models are needed. The general principle is simple: \emph{predict what to evaluate, but certify what to choose}.

\clearpage
\appendix
\renewcommand{\thefigure}{S\arabic{figure}}
\renewcommand{\theHfigure}{supp.\arabic{figure}}
\setcounter{figure}{0}
\numberwithin{theorem}{section}
\section*{Extended proofs, protocols, and reproducibility evidence}
The appendices are integral to this manuscript. The core decision certificate, empirical-ES computation,
and prediction-independent exactness are supported by the proofs below; the remaining appendices
specify the boundaries of the spectral, priced, native-oracle, and nonlinear-pricing results.
The displays labeled (S.1)--(S.46) retain the original supplement numbering for traceability.

\paragraph{Purpose and scope.}
These appendices give complete mathematical arguments, protocol details, and
reproducibility specifications behind the results in the main text. They do not
add new claims beyond the assumptions, complexity classifications, negative
boundaries, and empirical denominators stated above.  \Cref{supp:cut,supp:es} prove the certificate
results that \caspg{} relies on.  \Cref{supp:caspg} proves that the answer of
\caspg{} does not depend on its predictor and specifies the public-benchmark
protocol, its ablations and its independent audit.
\Cref{supp:coarsening,supp:lower,supp:structural,supp:ptas} prove the extensions
to general spectral risk and priced certificates.  \Cref{supp:online} proves
the online boundary.  \Cref{supp:protocol,supp:merton} specify the energy,
quadratic, compression and finance studies.

\paragraph{Branch names.}
The online agents share the stopping certificate and differ in their acquisition rules (\cref{fig:model}). The theory extensions below instead modify the risk or cost model; they should not be interpreted as extra CAS-PG evaluations. Agents:
\caspg{} (prediction-guided; history and transcript), \cas{PG-T} (transcript
only, no history), \cas{PG-H} (history only), \cas{PG-U} (predicted-top-$k$
row rule), and the non-predictive \cas{PD} (primal--dual), \cas{W} (width),
\cas{F} (frontier) and \cas{CI} (certificate impact).  Theory branches:
\cas{SC} (spectral coarsening, \cref{supp:coarsening}), \cas{SV} (structural
verification, \cref{supp:lower,supp:structural}) and \cas{PT} (priced
certificates, \cref{supp:ptas}).

\section{Exact-query model and one-cut composition}
\label{supp:cut}

There are $m$ candidates and $S$ scenario coordinates per candidate.  A
realized row $x_j$ belongs to a closed Cartesian box
$B_j=\prod_{s=1}^S[\ell_{js},u_{js}]$.  A query to coordinate $(j,s)$ reveals
$x_{js}$ exactly and replaces its interval by the singleton $\{x_{js}\}$.
Rows are information-disjoint in the theorem model.  The original spectrum is
explicit, rational, and coherent,
\[
 \rho_\lambda(y)=\sum_{r=1}^{S}\lambda_r y_{(r)},\qquad
 \lambda_1\ge\cdots\ge\lambda_S\ge0,\qquad \sum_r\lambda_r=1,
 \tag{S.1}
\]
where order statistics are descending.  Distinct fixed IDs resolve numerical
ties.

For a row query set $Q$, define residual endpoint rows
\[
 u_s^Q=\begin{cases}x_s&s\in Q,\\u_s&s\notin Q,\end{cases}
 \qquad
 \ell_s^Q=\begin{cases}x_s&s\in Q,\\\ell_s&s\notin Q.\end{cases}
 \tag{S.2}
\]
Coordinatewise monotonicity of every coherent spectrum gives exact attained
row extrema
\[
 U(Q)=\rho_\lambda(u^Q),\qquad L(Q)=\rho_\lambda(\ell^Q).
 \tag{S.3}
\]
Because the box is closed, the two corner rows in (S.2) themselves
attain these extrema; no separate compactness assumption is needed.

\begin{theorem}[One-cut characterization]
\label{supp:cut-thm}
Let $A=D_k(x)$ be the realized ID-resolved top-$k$ membership set.  A row-wise
query allocation $(Q_j)_j$ certifies $A$ if and only if
\[
 \max_{i\in A}(U_i(Q_i),\mathrm{id}_i)
 <_{\rm lex}
 \min_{j\notin A}(L_j(Q_j),\mathrm{id}_j).
 \tag{S.4}
\]
For unit costs, exact row frontiers therefore compose through a finite scan of
selected-row upper frontier keys.
\end{theorem}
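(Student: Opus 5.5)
The proof has three parts: sufficiency, necessity, and the unit-cost composition claim. All three rest on (S.3), which says each row's extremal risk over its residual box is attained at the corner rows $u^{Q}$ and $\ell^{Q}$ of (S.2).

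\emph{Sufficiency.} Take any completion $v\in B_Q(x)$. Because $\rho_\lambda$ is coordinatewise monotone, every $i\in A$ satisfies $(\rho_\lambda(v_i),\mathrm{id}_i)\preceq(U_i(Q_i),\mathrm{id}_i)$. Likewise, every $j\notin A$ satisfies $(L_j(Q_j),\mathrm{id}_j)\preceq(\rho_\lambda(v_j),\mathrm{id}_j)$. Chaining these two bounds through (S.4) shows that every key in $A$ strictly precedes every key outside $A$. Since $|A|=k$, this gives $D_k(v)=A$.

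\emph{Necessity.} Argue by contraposition. Suppose (S.4) fails, so there are $i\in A$ and $j\notin A$ with $(L_j,\mathrm{id}_j)\preceq(U_i,\mathrm{id}_i)$. IDs are distinct and $i\neq j$, so this is in fact a strict precedence. Now build a single completion $v$:
\begin{itemize}
\item set $v_i=u_i^{Q_i}$;
\item set $v_j=\ell_j^{Q_j}$;
\item set every other row $v_r=x_r$.
\end{itemize}
This $v$ lies in $B_Q(x)$ because the box is Cartesian and the rows are information-disjoint, so the row choices can be made independently. In $v$, the key of $j$ precedes the key of $i$.

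We must still show $A$ is not the top-$k$ set of $v$. Consider the other rows' keys relative to $i$ and $j$:
\begin{itemize}
\item Every other row of $A$ has a key preceding the $(k+1)$-st realized key.
\item Every other excluded row has a key following the $k$-th realized key.
\end{itemize}
A clean way to finish is a counting argument. Suppose $A=D_k(v)$. Then $j\notin D_k(v)$ means that at least $k$ keys precede $j$'s key, and all of them come from $A$. In particular $i$'s key must precede $j$'s key, contradicting the previous paragraph. Hence $D_k(v)\neq A$ and $Q$ does not certify $A$.

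\emph{Composition.} Define the frontiers $U_i^\star(q)$ and $L_j^\star(q)$ as in \cref{sec:hindsight}, each together with an attaining set. Any certificate $(Q_r)$ can be replaced row by row with attaining sets of the same or smaller size, since this only decreases each $U_i$ and increases each $L_j$. Condition (S.4) is preserved, so by the first part the new allocation still certifies $A$.

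An optimal certificate satisfying (S.4) has some largest selected upper key $\kappa=\max_{i\in A}(U_i,\mathrm{id}_i)$. This key is a value of some $(U_i^\star(q),\mathrm{id}_i)$ with $q\le S$, so there are at most $kS+k$ candidate values. For each candidate $\kappa$, the cheapest certificate with that cut is the sum of two kinds of independent row minima:
\begin{itemize}
\item for $i\in A$, the least $q$ with $(U_i^\star(q),\mathrm{id}_i)\preceq\kappa$;
\item for $j\notin A$, the least $q$ with $\kappa\prec(L_j^\star(q),\mathrm{id}_j)$.
\end{itemize}
These minima can be added because costs are additive and rows are disjoint. Minimizing over $\kappa$ gives the optimum. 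Monotonicity of the frontiers in $q$ makes each row minimum well defined, because the frontier is nonincreasing for $U^\star$ and nondecreasing for $L^\star$.

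The main subtlety is in the necessity step. One must check that lexicographic ID ties turn the failure of (S.4) into a strict reversal. One must also check that the joint completion really moves $i$ out of, or $j$ into, the top $k$; the counting argument above is the route I would use for this.
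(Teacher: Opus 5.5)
Your proposal is correct and follows the paper's proof essentially step for step: corner bounds give sufficiency, a jointly realizable completion (upper corner for $i$, lower corner for $j$) gives necessity, and composition replaces each row by an attaining frontier set and scans the $k(S+1)$ selected frontier keys. The one weak spot is your counting step for necessity, which is more roundabout than needed, and the clause ``all of them come from $A$'' is not literally true, since other excluded rows may also precede $j$; it suffices that $A=D_k(v)$ would force every member of $A$, in particular $i$, to precede every nonmember, in particular $j$.
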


\begin{proof}
If (S.4) holds, every compatible selected key is no larger than its
row upper key and every excluded key is no smaller than its row lower key, so
selected and excluded membership cannot cross.

Conversely, if (S.4) fails, there is a selected $i$ and excluded $j$
whose upper/lower keys are reversed or whose equality is owned by the
excluded ID.  Cartesian row independence permits simultaneous realization of
row $i$ at its upper corner and row $j$ at its lower corner.  At that
completion $j$ precedes $i$, contradicting invariant top-$k$ membership.

For composition, define
\[
 U_j^\star(q)=\min_{|Q|\le q}U_j(Q),\qquad
 L_j^\star(q)=\max_{|Q|\le q}L_j(Q).
\]
Replace each row of any certificate by an attaining same-budget frontier set;
all separating inequalities weakly improve.  The largest selected upper key
is one of the $k(S+1)$ selected-row frontier keys.  Scanning such a key $d$
and summing, row by row, the least budget placing selected upper keys at most
$d$ and excluded lower keys strictly above $d$ yields the optimum.  The
frontier witnesses concatenate into an original-coordinate certificate.
\end{proof}

\section{Exact empirical-ES row frontiers}
\label{supp:es}

Let the empirical ES tail mass be $\tau\in(0,S]$.  The finite-sample
variational identity is
\[
 \ES_\tau(y)=\min_{z\in\R}\left[z+\frac1\tau\sum_{s=1}^S(y_s-z)_+\right].
 \tag{S.5}
\]
For integer $\tau$ this is the mean of the largest $\tau$ coordinates; for
noninteger $\tau$ the boundary observation receives the usual fractional
mass.

\subsection{Upper frontier}
For a fixed threshold $z$, revealing upper coordinate $s$ reduces the hinge
objective by
\[
 g_s(z)=(u_s-z)_+-(x_s-z)_+\ge0.
 \tag{S.6}
\]
Thus, for fixed $z$ and budget $q$, the optimal query set is the $q$ largest
$g_s(z)$.  Each gain is a plateau below $x_s$, a ramp of slope $-1$ on
$[x_s,u_s]$, and zero above $u_s$.  Two ramp pieces have equal slope and
cannot exchange order in their interiors.  Hence gain ordering changes only
at endpoints or plateau--ramp intersections.  There are $O(S^2)$ such
rational breakpoints.

Between consecutive breakpoints the set of $q$ largest gains is fixed and the
ES objective is affine in $z$, so a minimum occurs at a breakpoint.  Exact
rational evaluation of every breakpoint therefore yields $U^\star(q)$ and an
attaining query set for every $q$.  Ties are resolved with deterministic
coordinate IDs and do not change the risk value.

\subsection{Lower frontier}
For integer tail size $r$, maximizing the ES lower endpoint is equivalent to
maximizing the sum of the $r$ largest values in the vector whose coordinate
$s$ is either $\ell_s$ or $x_s$.  Sort the endpoint events and use a dynamic
program whose state records the number of coordinates processed, the number
of tail slots filled, and the number of queries used.  At coordinate $s$ the
three feasible transitions are: omit it from the tail; use $\ell_s$ in the
tail; or query and use $x_s$ in the tail.  A standard endpoint-order DP (or,
equivalently, a rank-selection DP that enforces the chosen tail entries are
the largest entries) gives the exact maximum for each budget.  Revelation
changes which coordinates occupy the tail, so this calculation does not
freeze the initially active tail.

If $\tau=r+\delta$ with $0<\delta<1$, enumerate the coordinate occupying the
fractional $(r+1)$st tail position and whether it is queried.  The remaining
$r$ full-weight positions are solved by the integer DP subject to the pivot
threshold.  There are $O(S)$ pivots and polynomially many states per pivot.
All quantities are rational combinations of input endpoints; numerator and
denominator bit lengths remain polynomial.  Consequently both ES frontiers,
attaining query sets, and their one-cut composition have polynomial bit
complexity.

\section{\texorpdfstring{\caspg}{CAS-PG}: exactness, protocol and audit}
\label{supp:caspg}

\subsection{Prediction-independent exactness}
Work in the model of \cref{supp:cut} with empirical-ES rows and unit costs.
At each iteration \caspg{} holds, for every row, the current box (public
bounds with evaluated cells replaced by their exact values) and the exact
rational endpoints $L_j,U_j$ of (S.3).  Let $A$ be the set of the $k$ smallest
keys $(L_j,\id_j)$, $i^\star=\arg\max_{i\in A}(U_i,\id_i)$ and
$j^\star=\arg\min_{j\notin A}(L_j,\id_j)$.  The run stops and returns $A$ when
$(U_{i^\star},\id_{i^\star})<_{\rm lex}(L_{j^\star},\id_{j^\star})$.  Otherwise it
evaluates one unevaluated cell of row $i^\star$ if there is one, else of row
$j^\star$; which cell is decided by a score that may depend on an arbitrary
prior $p^0$ and on the transcript.

\begin{proposition}[Prediction-independent exactness]
\label{supp:sound}
For every realization $x$ in the public box and every prior $p^0$, \caspg{}
\textup{(i)} terminates after at most $mS$ evaluations, \textup{(ii)} returns
$D_k(x)$, and \textup{(iii)} stops at the first prefix of its own query
sequence that certifies the decision.  Hence
$\Ccert(x)\le C_{\caspg}(x)\le mS$.
\end{proposition}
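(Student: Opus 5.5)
The proof rests on one invariant and then on \cref{supp:cut-thm}, applied in both directions. At every iteration, the current box of each row contains the realized row $x_j$. A coordinate becomes a singleton only when the exact oracle returns $x_{js}$, and the prior and transcript correction only reorder queries; they never modify a box. So $B_Q(x)$ is exactly the current box, and $L_j,U_j$ from (S.3) are the attained extrema of the current row. Because of this, none of the later steps depends on $p^0$, which is what "prediction-independent" requires.

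\textbf{Soundness (ii).} Suppose the test passes for $A$. Then every selected upper key precedes every excluded lower key, i.e.\ (S.4) holds with $A$ in place of $D_k(x)$. The sufficiency half of the cut argument does not use $A=D_k(x)$: for every completion $v$ in the current box, the keys of rows in $A$ precede the keys of rows outside $A$. Hence $D_k(v)=A$, and in particular $A=D_k(x)$, since $x$ lies in the box by the invariant.

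\textbf{Earliest stopping (iii).} Suppose the current prefix $Q$ certifies $A'=D_k(x)$. By necessity in \cref{supp:cut-thm}, every $(U_i,\id_i)$ with $i\in A'$ precedes every $(L_j,\id_j)$ with $j\notin A'$. Since $L_i\le U_i$, each selected lower key precedes each excluded lower key. So the $k$ smallest lower keys are exactly $A'$, the algorithm forms $A=A'$, and the test, being (S.4), passes. The test is evaluated before every query, so the run halts at the first certifying prefix.

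\textbf{Progress and termination (i).} This is the step I expect to need the most care. I must show that when the test fails, the chosen row $r$ has an unevaluated cell. Suppose both $i^\star$ and $j^\star$ are fully evaluated. Then $L_{i^\star}=U_{i^\star}$ and $L_{j^\star}=U_{j^\star}$. Because $i^\star\in A$ and $j^\star\notin A$, and $A$ is the set of the $k$ smallest lower keys, we have $(L_{i^\star},\id_{i^\star})\prec(L_{j^\star},\id_{j^\star})$. Since the order is a strict total lexicographic order with distinct IDs, this gives $(U_{i^\star},\id_{i^\star})\prec(L_{j^\star},\id_{j^\star})$, so the test passes, a contradiction. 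Therefore, if $i^\star$ is fully evaluated then $j^\star$ is not, and the rule $r\gets i^\star$ else $j^\star$ always picks a row with an open cell. Each query evaluates a new cell, so there are at most $mS$ iterations.

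Finally, the run stops on a certifying query set, so $\Ccert(x)\le C_{\caspg}(x)$ by the definition (\ref{eq:ccert}), and $C_{\caspg}(x)\le mS$ by the count above.
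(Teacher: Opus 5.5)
Your proof is correct and follows the paper's argument step for step: the box-containment invariant, soundness of the stop via the sufficiency half of the cut, no late stop via the necessity half of \cref{supp:cut-thm}, and progress by showing that a fully evaluated $i^\star$ forces the test to pass. As in the paper, the full evaluation of $j^\star$ is never actually used, so your argument in fact shows that $i^\star$ always has an unevaluated cell whenever the test fails.
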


\begin{proof}
\emph{Invariant.}  A box coordinate changes only when the oracle returns its
exact value, so $x$ lies in the current box at every iteration, whatever $p^0$
is.  The prior enters only the choice among unevaluated cells of an already
chosen row.

\emph{Soundness of the stop.}  Suppose the test holds for $A$.  For every
completion $v$ of the current box and every $i\in A$, $j\notin A$,
$(\rho(v_i),\id_i)\le(U_i,\id_i)\le(U_{i^\star},\id_{i^\star})
<(L_{j^\star},\id_{j^\star})\le(L_j,\id_j)\le(\rho(v_j),\id_j)$.  So
$D_k(v)=A$ for every completion, in particular $A=D_k(x)$.

\emph{No late stop.}  Suppose the current transcript certifies $D_k(x)$.
By \cref{supp:cut-thm}, (S.4) holds with $A'=D_k(x)$.  Every selected lower
key is at most the corresponding upper key, which precedes every excluded
lower key; hence the $k$ smallest lower keys are exactly $A'$, so $A=A'$ and
the test fires at this iteration.  The test is evaluated before every query,
so the run stops at the first certifying prefix.  This gives (ii) and (iii).

\emph{Progress.}  Suppose the test fails and both rows $i^\star$ and $j^\star$
are fully evaluated.  Then $U_{i^\star}=L_{i^\star}$ and
$(L_{i^\star},\id_{i^\star})<(L_{j^\star},\id_{j^\star})$ because
$i^\star\in A$, $j^\star\notin A$ and IDs are distinct, so the test holds,
a contradiction.  Therefore a failed test always leaves an unevaluated cell in
row $i^\star$ or $j^\star$; each iteration evaluates a new cell, and there are
$mS$ cells.  This is (i).  The lower bound $\Ccert\le C_{\caspg}$ is the
definition of $\Ccert$ applied to the certifying set of evaluated cells.
\end{proof}

The proposition bounds nothing about cost beyond $[\Ccert,mS]$; by
\cref{supp:online} no zero-error online policy has a worst-case ratio.  The
same argument applies to every chooser plugged into the same loop (the random
baselines and \cas{PG-U} included): the loop, not the chooser, owns the
certificate.

\subsection{Score and predictor}
Inside the chosen row $r$ the score of an unevaluated cell $s$ is the exact
one-step change of the row's lower ES endpoint when the cell is pseudo-revealed
at its clipped predicted value $\hat p_{rs}$; ties are broken by the larger
prediction, then by primitive ID.  The pseudo-value is discarded.  On a
$[0,1]$ box this is the order of decreasing predicted loss.  The one-step
change of the upper endpoint is not used: on such a box it is zero until fewer
than $t$ cells of the row remain.

The prediction is $\hat p_{js}=\operatorname{clip}(p^0_{js}+e_s)$ with
$e_s=\sum_{j'\in R_s}(x_{j's}-p^0_{j's})/(|R_s|+1)$, where $R_s$ is the set of
rows already evaluated in scenario $s$.  The shrinkage constant $1$ is fixed.
The \emph{historical} prior $p^0$ is cross-fitted by scenario over the ten
folds published in the benchmark's cross-validation file (repetition 1).
Inputs are the public instance meta-features (median-imputed, signed-log
transformed, standardized; all fitted on training folds), with one output per
candidate.  The model is ridge regression with 13 penalties in
$[10^{-2},10^{4}]$ or a 300-tree random forest, chosen per outer fold by
five-fold inner cross-validation on that fold's training instances; one fixed
seed, one thread.  The proxy for an instance therefore never uses any outcome
on that instance.  It does use complete outcome columns of the other nine
folds, which the replay does not charge.  \cas{PG-T} (no history) replaces $p^0$ by
the interval midpoint $1/2$; \cas{PG-H} (history only) switches the correction
$e_s$ off.

\cas{PG-U} scores the unevaluated cells of every unresolved row that the
predicted completion ranks in the top $k$; when no such row has an
unevaluated cell it applies the \caspg{} rule.

\subsection{Tasks, data and validity rules}
Data: the public ASlib repository at a fixed commit, verified file by file
against recorded SHA-256 digests.  Every published value is parsed from its
decimal string into an exact rational.
\begin{center}\small
\begin{tabular}{lll}
\toprule
 & OPENML-WEKA-2017 & CSP-Minizinc-Obj-2016\\
\midrule
performance column & predictive accuracy (maximize) & objective (minimize)\\
loss & $1-$accuracy & objective, unchanged\\
candidates $\times$ scenarios & $30\times105$ & $22\times100$\\
records / duplicates / missing & 3150 / 0 / 0 & 2200 / 0 / 0\\
run status & 3150 ok & 1822 ok, 378 timeout\\
loss range & $[0,159/160]$ & $[0,1]$\\
cells equal to 0 / 1 & 141 / 0 & 923 / 605\\
ES level, tail mass & 0.10, $21/2$ & 0.50, 50\\
$k$, cost & 3, one per cell & 3, one per cell\\
acceptance threshold & $C\le1.50\,\Ccert$ & $C\le1.75\,\Ccert$\\
\bottomrule
\end{tabular}
\end{center}
All candidates and scenarios are retained.  The MiniZinc scenario documents
its objective as normalized to $[0,1]$ with timeouts recorded as 1, and all
378 timeout cells carry the value 1; the published value is used as the loss.
The public interval of every unevaluated cell is $[0,1]$.  Ties are owned by
the lexicographic order of stable candidate IDs.  The full-information top-3
have ES $0.5773,0.5895,0.6011$ (fourth: $0.6270$) on OPENML and
$0.5651,0.6019,0.6299$ (fourth: $0.6526$) on MiniZinc; there is no tie at
either cut.

A run is valid only if the returned set equals the independently computed
full-information set, the final transcript re-certifies, the transcript one
query shorter does not certify, and no primitive is charged twice.  All
reported runs are valid, with zero certificate violations.

\subsection{Complete results}
\begin{center}\small
\begin{tabular}{lrrrrrr}
\toprule
 & \multicolumn{3}{c}{OPENML-WEKA-2017} & \multicolumn{3}{c}{CSP-Minizinc-Obj-2016}\\
method & evals & $/mS$ & $/\Ccert$ & evals & $/mS$ & $/\Ccert$\\
\midrule
full exact $mS$ & 3150 & 1.000 & 5.686 & 2200 & 1.000 & 2.638\\
hindsight $\Ccert$ (offline) & 554 & 0.176 & 1.000 & 834 & 0.379 & 1.000\\
\cas{W} (width) & 3074 & 0.976 & 5.549 & 1966 & 0.894 & 2.357\\
\cas{F} (frontier) & 3039 & 0.965 & 5.486 & 2191 & 0.996 & 2.627\\
\cas{PD} (primal--dual) & 1872 & 0.594 & 3.379 & 1699 & 0.772 & 2.037\\
uniform over all cells (median) & 3144 & 0.998 & 5.675 & 2169 & 0.986 & 2.601\\
uniform in witness (median) & 2006 & 0.637 & 3.621 & 1661.5 & 0.755 & 1.992\\
\caspg{} & 630 & 0.200 & 1.137 & 981 & 0.446 & 1.176\\
\cas{PG-U} & 626 & 0.199 & 1.130 & 989 & 0.450 & 1.186\\
\cas{PG-T} (no history) & 709 & 0.225 & 1.280 & 1144 & 0.520 & 1.372\\
\cas{PG-U}, transcript only & 792 & 0.251 & 1.430 & 1126 & 0.512 & 1.350\\
\cas{PG-H} (history only) & 623 & 0.198 & 1.125 & 1043 & 0.474 & 1.251\\
\cas{PG-U}, history only & 623 & 0.198 & 1.125 & 1047 & 0.476 & 1.255\\
\bottomrule
\end{tabular}
\end{center}
Random baselines use seeds 0--29.  Interquartile ranges and ranges: random,
all cells 3139.5--3145.75 (3118--3150) and 2153.5--2180.5 (2134--2195);
random in witness 1975--2036.25 (1910--2132) and 1650.25--1668.75
(1615--1698).  The no-history and history-only rows were declared in the
protocol as diagnostics of the two prediction inputs, not as additional
variants for the acceptance thresholds.  The reported numbers are those of the
first execution; a second execution produced byte-identical transcripts.

\paragraph{Where the gap to $\Ccert$ lies (evaluations by row group).}
\begin{center}
\begin{tabular}{lrrr}
\toprule
 & hindsight & \caspg{} & \cas{PG-U}\\
\midrule
OPENML, three selected rows & 314 & 315 & 315\\
OPENML, 27 excluded rows & 240 & 315 & 311\\
MiniZinc, three selected rows & 217 & 296 & 300\\
MiniZinc, 19 excluded rows & 617 & 685 & 689\\
\bottomrule
\end{tabular}
\end{center}
On OPENML one near-cut row (seventh by ES, $0.634$ against the cut at
$0.601$) takes 69 (\caspg{}) or 58 (\cas{PG-U}) evaluations against a
hindsight budget of 10; the median excluded row takes 10 (hindsight 7--10).
On MiniZinc hindsight leaves 23--31 cells per selected row unevaluated because
their true value is 1 or close to it; the policy evaluates 97--100 of 100.
The remaining 68--72 extra evaluations are on excluded rows, mostly the fifth
row by ES (57 or 58 against 33).

\paragraph{Predictor sensitivity and controls (run by the independent audit).}
Evaluations, with the multiple of $\Ccert$ where stated.
\begin{center}\small
\setlength{\tabcolsep}{4pt}
\begin{tabular}{lrrrr}
\toprule
 & \multicolumn{2}{c}{OPENML-WEKA-2017} & \multicolumn{2}{c}{CSP-Minizinc-Obj-2016}\\
prior & \caspg{} & \cas{PG-U} & \caspg{} & \cas{PG-U}\\
\midrule
protocol predictor & 630 & 626 & 981 & 989\\
other predictor seeds (2) & 634 / 622 & 626 / 623 & 981 / 988 & 988 / 996\\
random fold assignments (2) & 631 / 635 & 631 / 633 & 1034 / 1032 & 1055 / 1056\\
permuted across scenarios (2) & 753 / 647 & 670 / 616 & 1177 / 1177 & 1187 / 1183\\
permuted across candidates (2) & 738 / 743 & 1184 / 1179 & 1171 / 1195 & 1263 / 1428\\
row rule, random in-row order (median, 30 seeds) & 1905.5 & & 1641.5 & \\
\bottomrule
\end{tabular}
\end{center}
Under permuted priors \caspg{} stays within $1.17$--$1.43\,\Ccert$;
\cas{PG-U} reaches about $2.1\,\Ccert$ on OPENML and $1.71\,\Ccert$ on
MiniZinc.  The row rule with a uniformly random in-row order and no model
costs $3.44\,\Ccert$ and $1.97\,\Ccert$, the same as random-in-witness, so the
whole improvement is the predicted within-row order.

\paragraph{Overhead.}
Chooser CPU per full run: 0.19--0.28\,s (\caspg{}) and about 1.3\,s
(\cas{PG-U}).  Whole run including exact certification: 1.7--3.1\,s, against
5--29\,s for \cas{W}, \cas{F} and \cas{PD}, whose receipts do not separate
scheduler time; much of the difference is fewer certificate refreshes.
Cross-fitted predictor fit, once per task: 21--28\,s CPU on one thread.

\subsection{Independent audit}
A separate verifier, sharing no checker code with the implementation (own data
reader, exact ES, certificate test and separator enumeration), reported the
following.
\begin{itemize}
\item \emph{Data.}  All source files match their digests; both matrices and
their hashes were reconstructed independently, with no duplicates or missing
cells.  The fractional ES convention was checked by hand and against the
variational form on all 52 rows.
\item \emph{Transcripts.}  For 18 deterministic and 120 random transcripts:
query count equals reported cost; no primitive is charged twice; the run is
never certified before its last query and is certified after it; the state one
query earlier admits two explicit completions with different top-3 sets.
\item \emph{Hindsight optimum.}  The 554- and 834-cell sets certify, and
removing any single cell breaks the certificate.  A separate enumeration over
lexicographic separators (36 and 97 candidates) gives the same minima; the
next-best separators cost 557 and 835.  Global optimality rests on an exchange
argument that was not machine-checked beyond agreement with the exact solver.
\item \emph{Leakage.}  In 576 attacks that altered unevaluated values, the
next query was unchanged in every case, while altering one \emph{evaluated}
value does change later queries.  Flipping the outcomes of a held-out fold
leaves that fold's predictions bit-identical.  A reachability search from the
chooser object finds no reference to the hidden matrix, the oracle or the
hindsight solver.  Only exact rationals enter the certificate path.
\item \emph{Protocol hygiene.}  The record that fixes the protocol before the
first run is a same-session hash file, not a version-control commit.  Three of
its nine hashes no longer match: a source file was reformatted, and both
predictor caches were rebuilt once to test determinism (identical predictions;
only a recorded timing field differs).  The runs were executed twice with
byte-identical transcripts.  The audit could not confirm the pre-run ordering
beyond file times, nor the statement that development used synthetic matrices
only.
\item \emph{Fairness.}  The audit regards the no-history variant as the
primary comparison against the non-predictive schedulers, because the
historical predictor is uncharged side information.  The acceptance thresholds
were fixed with pilot values of $\Ccert$ in view.
\end{itemize}

\paragraph{Scope.}
Two matrices, $[0,1]$ boxes, unit costs, $k=3$, one tail level each; no
secondary cells were run.  Nothing here concerns shared primitives, interval
oracles or non-unit costs.  The historical predictor was not designed as an
algorithm selector.  Evaluated post hoc as a plain per-instance selector on
the official train/test split of the 2017 algorithm-selection challenge
\citep{LindauerEtAl2019}, the same pipeline is slightly worse than the single
best algorithm on OPENML-WEKA-2017 (remaining gap $1.11$) and leaves $0.52$ of
the gap on CSP-Minizinc-Obj-2016.  These numbers are context for the
statement that the predictor's role is query ordering; no algorithm-selection
performance claim is made.

\subsection{The primal--dual scheduler \cas{PD}}
\cas{PD} maintains, at each uncertified transcript, a
universal witness $W$ of unevaluated primitives (every certificate must
evaluate at least one of them), raises a dual weight $y_W$ until some member
is tight under $\sum_{W\ni p}y_W\le c_p$, and evaluates a tight primitive.  In
the independent exact point-query model with positive costs this gives
$L_{\rm PD}\le\Ccert\le C_{\rm alg}\le\Gamma_{\rm run}L_{\rm PD}$ with
$L_{\rm PD}=\sum_Wy_W$ and $\Gamma_{\rm run}$ the largest witness size met in
the run.  $\Gamma_{\rm run}$ is run-dependent and can be as large as the
number of primitives, so this is an instance-relative sandwich and not a
constant or logarithmic competitive ratio.

\section{Decision-preserving spectral coarsening}
\label{supp:coarsening}

Let $\mathcal P$ be a contiguous partition of the rank set.  For block
$B=(a,b]$ define
\[
 \mu_B=\frac{1}{b-a}\sum_{r=a+1}^{b}\lambda_r,
 \qquad c_r=\lambda_r-\mu_B,
 \tag{S.7}
\]
and let $\widetilde\lambda^\mathcal P_r=\mu_B$ on that block.  With
$G(r)=\sum_{h\le r}\lambda_h$, define the chord deviation
\[
 e_r=G(r)-G(a)-(r-a)\mu_B,
 \qquad a\le r\le b.
 \tag{S.8}
\]
Concavity of $G$ gives $e_r\ge0$ and $e_a=e_b=0$.

For a current candidate box, sort lower and upper endpoint rows independently,
writing $L_1\ge\cdots\ge L_S$ and $U_1\ge\cdots\ge U_S$.  Define the block
quantities
\[
 A_B=\sum_{r=a+1}^{b-1}e_r[L_r-U_{r+1}]_+,
 \tag{S.9}
\]
\[
 B_B=\sum_{r\in B}(c_r)_+U_r-
     \sum_{r\in B}(-c_r)_+L_r,
 \tag{S.10}
\]
\[
 Z_B=\sum_{r\in B}(c_r)_+L_r-
     \sum_{r\in B}(-c_r)_+U_r.
 \tag{S.11}
\]
Put $\alpha_{\mathcal P}=\sum_BA_B$,
$\beta_{\mathcal P}=\sum_BB_B$, and
$\widehat\alpha_{\mathcal P}=\sum_B\max\{A_B,Z_B\}$.

\begin{theorem}[Uniform residual certificate]
\label{supp:coarsen-thm}
For every compatible row $y$,
\[
 0\le \alpha_{\mathcal P}\le
 \widehat\alpha_{\mathcal P}\le
 \rho_\lambda(y)-\rho_{\widetilde\lambda^\mathcal P}(y)
 \le\beta_{\mathcal P}.
 \tag{S.12}
\]
Therefore
\[
 \left[
 \rho_{\widetilde\lambda^\mathcal P}(\ell)+\widehat\alpha_{\mathcal P},
 \rho_{\widetilde\lambda^\mathcal P}(u)+\beta_{\mathcal P}
 \right]
 \tag{S.13}
\]
is a valid interval for the original spectrum.
\end{theorem}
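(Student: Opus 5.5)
The plan is to work block by block with the residual, written as $\Omega_{\mathcal P}(y)=\sum_{B}\sum_{r\in B}c_r\,y_{(r)}$. Here $y_{(1)}\ge\cdots\ge y_{(S)}$ are the descending order statistics of $y$, and $c_r=\lambda_r-\mu_B$ as in (S.7). This identity holds because $\widetilde\lambda^{\mathcal P}_r=\mu_B$ on each block. Each inequality in (S.12) will then be proved for a single block and summed over $B\in\mathcal P$.

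\emph{Step 1: order-statistic sandwich.} If $\ell\le y\le u$ coordinatewise, then $L_r\le y_{(r)}\le U_r$ for every rank $r$, where $L$ and $U$ are the independently sorted endpoint rows. This is the standard fact that the $r$-th largest entry is coordinatewise monotone. Concretely, at least $r$ coordinates of $y$ dominate the $r$ coordinates of $\ell$ achieving $L_1,\dots,L_r$. Symmetrically, at most $r-1$ coordinates of $y$ can exceed $U_r$.

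\emph{Step 2: the two-sided bounds $Z_B$ and $B_B$.} Bound each term $c_r y_{(r)}$ using Step 1. Take $y_{(r)}\le U_r$ when $c_r>0$ and $y_{(r)}\ge L_r$ when $c_r<0$; this yields $\sum_{r\in B}c_ry_{(r)}\le B_B$. The reversed choice yields $\sum_{r\in B}c_ry_{(r)}\ge Z_B$.

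\emph{Step 3: the chord lower bound $A_B$ (the main step).} The partial sums of $c$ inside $B=(a,b]$ are
\[
\sum_{h=a+1}^{r}c_h=G(r)-G(a)-(r-a)\mu_B=e_r,
\]
with $e_a=e_b=0$. Abel summation therefore gives
\[
\sum_{r\in B}c_r y_{(r)}=\sum_{r=a+1}^{b-1}e_r\bigl(y_{(r)}-y_{(r+1)}\bigr).
\]
Since $\lambda$ is nonincreasing, $G$ is concave, so $e_r\ge0$ as already noted after (S.8). Each gap $y_{(r)}-y_{(r+1)}$ is nonnegative, and by Step 1 it is at least $L_r-U_{r+1}$. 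Hence it is at least $[L_r-U_{r+1}]_+$, and the block residual is at least $A_B\ge0$. Combining with Step 2, the block residual is at least $\max\{A_B,Z_B\}\ge A_B$. Summing over blocks gives $0\le\alpha_{\mathcal P}\le\widehat\alpha_{\mathcal P}\le\Omega_{\mathcal P}(y)\le\beta_{\mathcal P}$. I expect this Abel-summation identity, with the boundary terms vanishing because $e_a=e_b=0$, to be the only point needing real care.

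\emph{Step 4: the interval (S.13).} The block-averaged spectrum $\widetilde\lambda^{\mathcal P}$ is again nonincreasing and sums to one, since averages of consecutive blocks of a nonincreasing sequence remain ordered. Hence $\rho_{\widetilde\lambda^{\mathcal P}}$ is coordinatewise monotone, and $\rho_{\widetilde\lambda^{\mathcal P}}(\ell)\le\rho_{\widetilde\lambda^{\mathcal P}}(y)\le\rho_{\widetilde\lambda^{\mathcal P}}(u)$. Adding the residual bounds from Step 3 to $\rho_\lambda(y)=\rho_{\widetilde\lambda^{\mathcal P}}(y)+\Omega_{\mathcal P}(y)$ gives (S.13).
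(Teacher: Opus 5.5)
Your proposal is correct and matches the paper's proof step for step: the blockwise summation by parts against the chord deviations $e_r$ gives the gap form (S.14), the sorted-endpoint sandwich $L_r\le y_{(r)}\le U_r$ bounds it below by $A_B$, substituting endpoints according to the sign of $c_r$ gives $Z_B$ and $B_B$, and adding the summed residual bounds to the coarsened endpoint risks gives (S.13). Your explicit check that $\widetilde\lambda^{\mathcal P}$ is again a nonnegative spectrum, so that $\rho_{\widetilde\lambda^{\mathcal P}}$ is coordinatewise monotone, supplies a step the paper leaves implicit.
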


\begin{proof}
Inside a block, summation by parts gives
\[
 \sum_{r\in B}c_ry_{(r)}
 =\sum_{r=a+1}^{b-1}e_r(y_{(r)}-y_{(r+1)})\ge0.
 \tag{S.14}
\]
Sorted endpoint monotonicity gives $L_r\le y_{(r)}\le U_r$.  Hence each
descending gap in (S.14) is at least $[L_r-U_{r+1}]_+$, proving the
block lower bound $A_B$.  Substituting $L_r$ against positive $c_r$ and
$U_r$ against negative $c_r$ gives $Z_B\le\sum c_ry_{(r)}$.  Substituting
opposite endpoints gives the upper bound $B_B$.  The maximum of two valid
lower bounds remains valid; summing over blocks proves (S.12).  Adding
these residual bounds to the coarsened endpoint risks proves (S.13).
\end{proof}

The enhanced endpoint form is additive.  If $d_r=U_r-L_r$, define
\[
 E_B=\sum_{r\in B}(\mu_B-\lambda_r)_+d_r,
 \qquad
 D_B=\sum_{r=a+1}^{b-1}e_r\min\{L_r-L_{r+1},d_{r+1}\}.
 \tag{S.15}
\]
Direct subtraction yields
\[
 \mathsf U_{\mathcal P}=\rho_\lambda(u)+\sum_BE_B,
 \qquad
 \widehat{\mathsf L}_{\mathcal P}
 =\rho_\lambda(\ell)-\sum_B\min\{D_B,E_B\}.
 \tag{S.16}
\]
Thus the best upper and lower corrections over exactly $K$ blocks are obtained
by the last-block recurrence
\[
 H[k,b]=\min_{k-1\le a<b}\{H[k-1,a]+C(a,b)\},\qquad H[0,0]=0.
 \tag{S.17}
\]
Prefix sums and the single sign change of $\lambda_r-\mu_B$ evaluate every
block cost in polynomial rational time; after $O(S^2\log S)$ preprocessing,
the segmentation DP uses $O(KS^2)$ arithmetic operations.  Revelation raises
sorted lower endpoints and lowers sorted upper endpoints, so $A_B,Z_B$ can
only improve and $B_B$ can only shrink.  Fixed-partition corrected intervals
therefore tighten monotonically; optimizing over the unchanged partition
family preserves that monotonicity.

Define $K_{\rm dec}^{\exists}$ using the original lower floor $A_B$, and
$K_{\rm dec}^{\sharp}$ using $\max\{A_B,Z_B\}$.  Both are the minimum number
of blocks, over candidate-specific partitions with at most that many blocks,
needed for the exact ID-aware separator.  Because the upper correction is
unchanged and the lower correction only improves,
$K_{\rm dec}^{\sharp}\le K_{\rm dec}^{\exists}$ whenever finite.  These are
procedure-relative certified degrees, not intrinsic lower bounds on every
possible representation.

\subsection{Current-box versus frozen-root margin bounds}
For a partition define
\[
 h(\mathcal P)=\sum_{B\in\mathcal P}\sum_{r\in B}(\mu_B-\lambda_r)_+
 =\tfrac12\|\lambda-\widetilde\lambda^\mathcal P\|_1,
 \qquad h_K=\min_{|\mathcal P|\le K}h(\mathcal P).
 \tag{S.18}
\]
If current row $j$ has maximum coordinate width $w_j$, (S.15)--
(S.16) imply that its selected-side upper inflation and excluded-side
lower deflation are each at most $w_jh(\mathcal P)$.  Hence, with exact
current-box endpoint gap $\gamma_{ij}$,
\[
 (w_i+w_j)h_K<\gamma_{ij}
 \tag{S.19}
\]
is sufficient for $K$-block certification.  Equality is also sufficient when
the selected ID owns the tie.  This is the factor-one current-box statement.
If correction bounds are frozen once on a larger root box and reused inside a
query-set optimizer, the residual envelope width is at most
$2w_jh(\mathcal P)$, giving the separate factor-two sufficient transfer
condition.  The two statements use different boxes and are not interchangeable.

\section{Arbitrary-spectrum coherent lower frontier}
\label{supp:lower}

For one lower row write $a_s=\ell_s$ and $b_s=x_s\ge a_s$.  Let
$v_0<\cdots<v_M$ be all distinct endpoints.  On cell $(v_{t-1},v_t)$ let
$n_t$ be the number of baseline lower endpoints above its interior and
$A_{ts}$ indicate whether interval $[a_s,b_s]$ crosses the cell.  With
$z_s\in\{0,1\}$ denoting revelation,
\[
 \rho_\lambda(a[z\leftarrow b])
 =v_0+\sum_{t=1}^{M}(v_t-v_{t-1})
 G\!\left(n_t+\sum_sA_{ts}z_s\right),
 \tag{S.20}
\]
where $G(r)=\sum_{h\le r}\lambda_h$ is linearly interpolated between
integers.  Because $\lambda$ is nonincreasing, $G$ is concave and
nondecreasing.

Relax $z\in[0,1]^S$ with $\sum_sz_s=q$.  The hypograph of each term in
(S.20) has the exact linear description
\[
 \eta_t\le G(r)+\lambda_{r+1}
 \left(n_t+\sum_sA_{ts}z_s-r\right),\qquad r=0,\ldots,S-1.
 \tag{S.21}
\]
Thus maximizing (S.20) is a polynomial-size rational LP.  The missing
issue is integrality.

Take an LP optimum.  If two fractional variables correspond to intervals
$I_j\subseteq I_i$, move
$\delta=\min\{1-z_i,z_j\}$ mass from $z_j$ to $z_i$.  Cardinality is
preserved; every cell covered by $I_j$ is also covered by $I_i$, so every
coverage count weakly increases.  Because $G$ is nondecreasing, the objective
cannot decrease.  At least one variable becomes integral.  Repeating leaves
fractional intervals with no containments.  Such a proper interval family has
left and right endpoints in the same strict order, so the fractional intervals
active on any cell form one contiguous block.

Order the remaining fractional variables $z_1,\ldots,z_r$, let
$s_i=\sum_{h\le i}z_h$, and note that their total is integer after subtracting
the already integral cardinality.  For $\theta\in[0,1)$ set
\[
 Z_i(\theta)=\lfloor s_i+\theta\rfloor-\lfloor s_{i-1}+\theta\rfloor.
 \tag{S.22}
\]
Then $Z_i\in\{0,1\}$, total cardinality is preserved, and
$\mathbb E Z_i=z_i$ for uniform $\theta$.  For any contiguous block of
fractional variables, its rounded sum is either the floor or ceiling of its
fractional sum and has that sum as expectation.  After adding the integral
part of a cell count, the two possibilities are adjacent integers.  The
linearly interpolated $G$ is affine between them; hence
\[
 \mathbb E_\theta G\!\left(n_t+\sum_sA_{ts}Z_s(\theta)\right)
 =G\!\left(n_t+\sum_sA_{ts}z_s\right).
 \tag{S.23}
\]
Summing over cells gives expected rounded objective equal to the LP optimum.
Only $O(S)$ distinct rounded vectors occur as $\theta$ crosses fractional
prefix breakpoints, so one can enumerate them and return an integral optimum
and witness.  Rational LP solvability, containment updates, breakpoint
arithmetic, and objective comparisons all have polynomial bit length.  This
proves polynomial bit complexity for every unit-cardinality budget.  The
argument does not extend to arbitrary weighted budgets.

\section{Structural arbitrary-spectrum upper algorithms}
\label{supp:structural}

For upper endpoint queries, let $[x_s,u_s]$ be the active query-change
interval.  With endpoint cells as above, define $b_t$ as the baseline number
of upper endpoints above cell $t$ and $c_Q(t)$ as the number of queried
intervals covering it.  Then
\[
 U(Q)=v_0+\sum_t\Delta_t G(b_t-c_Q(t)).
 \tag{S.24}
\]
This follows from the layer-cake identity
$\rho_\lambda(y)=v_0+\int G(|\{s:y_s>z\}|)\,dz$.

\subsection{Common anchor and ordered blocks}
Containment exchange is safe at unit cost: replacing a selected interval by an
unselected containing interval weakly increases every coverage $c_Q(t)$ and
therefore cannot increase (S.24).  Among fixed-cardinality optima there
is consequently a down-set in the two-rank partial order given by decreasing
$u_s$ and increasing $x_s$.

Assume first that all active intervals share an anchor,
\[
 \max_sx_s\le\min_su_s.
 \tag{S.25}
\]
Let active coordinates be indexed by decreasing $u$, and let $p(i)$ be their
rank in increasing $x$.  Let fixed inactive values be
$z_1\ge\cdots\ge z_f$ and define
\[
 B_{\rm fix}=\sum_{j=1}^{f}\lambda_jz_j,
 \quad d(v)=|\{j:z_j>v\}|,
\]
\[
 C_b(v)=\lambda_{b+d(v)}v+
 \sum_{j=d(v)+1}^{f}(\lambda_{j+b}-\lambda_{j+b-1})z_j.
 \tag{S.26}
\]
If active values $v_1\ge\cdots\ge v_n$ are inserted into the fixed
background in order, telescoping rank shifts give
\[
 \rho_\lambda(v_1,\ldots,v_n,z_1,\ldots,z_f)
 =B_{\rm fix}+\sum_{b=1}^{n}C_b(v_b).
 \tag{S.27}
\]

For exact query count $q$, construct lattice paths from $(0,n)$ to $(n,0)$.
A row event processes the next $u_i$; a column event processes the next $x$
value in decreasing order.  Coordinate $i$ is queried exactly when its row
event occurs before its column event.  Down-sets correspond to such paths.
A state $D_q(i,j,a)$ records row index $i$, column index $j$, and number $a$
of completed queried coordinates.  The number of completed unqueried
coordinates is determined by $(i,j,a)$.  If a row event closes an already
passed column, charge the corresponding upper insertion $C_b(u_i)$; if a
column event closes an already passed row, charge the realized insertion
$C_{n-q+a}(x_i)$; the two commitment events have zero immediate cost.
Every endpoint is charged once.  Condition (S.25) fixes the relative
stream order between unqueried upper and queried realized values, so no hidden
path history affects future insertion costs.  There are $O(n^2(q+1))$ states
and two outgoing transitions; all budgets require $O(S^4)$ rational
arithmetic, and backpointers recover original query coordinates.

If the active intervals split into value-separated blocks and each block has a
common anchor, run the local DP at its global rank offset and convolve the
block budget curves.  Strict-positive-overlap components cannot cross block
boundaries; conversely, if every such connected component has a common anchor,
the components ordered by value form the required block partition.  Sorting
and scanning recognizes this condition in $O(S\log S)$ comparisons.

\subsection{Endpoint-pair diversity}
Group identical active pairs $(x_s,u_s)$ into $d_{\rm ep}$ types with
multiplicity $m_h$.  At exact budget $q$, let $z_h$ be the queried count in
type $h$:
\[
 0\le z_h\le m_h,\qquad \sum_hz_h=q,\qquad z_h\in\mathbb Z.
 \tag{S.28}
\]
The cell formula becomes
\[
 \mathcal U(z)=v_0+\sum_t\Delta_t
 G\!\left(b_t-\sum_hA_{th}z_h\right).
 \tag{S.29}
\]
It is concave on the continuous box-hyperplane relaxation.  Suppose an optimal
integer count vector has two partial types $a,b$.  Move along
$z_a+\theta,z_b-\theta$.  The feasible interval in $\theta$ has integer
endpoints and at either endpoint one of the two types becomes empty or full.
Concavity implies at least one endpoint has objective no larger than the
current point; since the current point is optimal, that integer endpoint is
also optimal.  Repeating leaves at most one partial type.  Therefore it
suffices to enumerate a subset of fully queried types and an optional partial
type whose count is forced by the residual budget: at most
$(d_{\rm ep}+1)2^{d_{\rm ep}}$ patterns.  Evaluating each pattern gives
all-budget work
\[
 O((d_{\rm ep}+1)2^{d_{\rm ep}}S^2\log S)
 \tag{S.30}
\]
up to polynomial rational bit factors.

Selected upper frontiers from either route and unrestricted excluded lower
frontiers from \cref{supp:lower} compose exactly by
\cref{supp:cut-thm}.  This yields the exact minimum original-box certificate
and a witness.  The parameter $d_{\rm ep}$ is endpoint-pair diversity, not
spectral band count.

\section{Few-price-class certificate-cost PTAS}
\label{supp:ptas}

Primitive prices are strictly positive rationals from $c$ distinct classes
$V_1,\ldots,V_c$, where $c$ is fixed.  We first prove a class-count menu
theorem for interval coverage and then apply it to spectral endpoint gains.

\subsection{Coverage formulation and local representatives}
Let $z_1<\cdots<z_M$ be marker points.  For an interval set $Q$ define
$c_Q(z)=|\{I\in Q:z\in I\}|$ and
\[
 F(Q)=\sum_{t=1}^{M}B_t(c_Q(z_t)),
 \tag{S.31}
\]
where each $B_t:\{0,\ldots,S\}\to\mathbb Q_{\ge0}$ is nondecreasing and
explicitly tabulated.  Upper and lower spectral endpoint improvements take
this form after endpoint-cell decomposition.

Fix one price class and an anchor interval $I=[a,b]$.  Let
$K_I$ be class intervals containing anchor point $b$ and $C_I$ those containing
the whole anchor interval.  We construct a family $\mathcal R_I$ of
$S^{O(1/\varepsilon)}$ subsets of $K_I$, each containing $C_I$, with the
following property: for every $C_I\subseteq M_I\subseteq K_I$ there is
$R_I\in\mathcal R_I$ such that
\[
 c_{R_I}(z)\ge c_{M_I}(z)\quad\forall z,
 \qquad
 |R_I\setminus C_I|\le(1+\varepsilon)|M_I\setminus C_I|.
 \tag{S.32}
\]

To see this constructively, order $V=K_I\setminus C_I$ by increasing left
endpoint.  Enumerate all subsets of size at most $H=\lceil2/\varepsilon\rceil$,
so small targets are present exactly.  For a larger target of outside-compulsory
size $p$, put $d=\lfloor\varepsilon p\rfloor\ge\varepsilon p/2$.  If fewer
than $d$ class intervals are absent from the target, taking all of $K_I$ costs
less than $p+d\le(1+\varepsilon)p$.  Otherwise retain the whole prefix through
the $d$th omitted interval.  Partition the suffix at every $d$th target
interval; in each full part retain the same number of intervals with largest
right endpoints, with an analogous final partial part.  The extra $d$ prefix
intervals compensate for the only partially started suffix part to the left
of the anchor, while largest-right-endpoint replacement cannot reduce
coverage to the right.  There are $O(1/\varepsilon)$ parts, and enumerating
$d$, the prefix, part boundaries, and final quota yields
$S^{O(1/\varepsilon)}$ candidates.  This proves (S.32) without ever
replacing a primitive by one of another price class.

\subsection{One-class paths and once-only charging}
Make a node $(I,R)$ for every anchor and representative from its family.
Connect two real nodes only when their anchor intervals are disjoint and
ordered.  A path therefore has ordered anchors.  Assign an edge from node
$v$ to node $w$ the coverage proxy $c_{R_v\cup R_w}$ on the slab between the
anchor endpoints and cardinality charge $|R_w\setminus R_v|$.

If a physical interval appears at two path nodes, it spans both anchor
endpoints and hence contains every intermediate anchor.  It belongs to every
intermediate compulsory set and representative.  Membership is consecutive:
an interval cannot disappear and later reappear.  Consequently edge charges
sum exactly to the cardinality of the union, with every physical primitive
charged once.

For an arbitrary comparison set in this price class, close it upward under
containment (cost is unchanged inside the class).  Greedily choose disjoint
minimal anchor intervals.  Every comparison interval contains at least one
retained anchor endpoint.  At each anchor choose the representative guaranteed
by (S.32).  Consecutive representatives have exactly the same
intersection as their comparison-anchor sets: an interval from the first
representative reaching the next anchor must contain it and is compulsory
there; pointwise dominance forces equality of intersection cardinalities.
Thus unions of adjacent representatives dominate comparison coverage on each
slab.  Consecutive membership also gives
\[
 \left|\bigcup_iR_i\right|-|J_h|
 =\sum_i(|R_i|-|M_i|)
 \le\varepsilon\sum_i|M_i\setminus C_i|
 \le\varepsilon|J_h|,
 \tag{S.33}
\]
because a comparison interval can be noncompulsory only at its first retained
anchor.  Hence the canonical path dominates the comparison coverage with
$(1+\varepsilon)$ class-count inflation.

\subsection{Synchronizing price classes}
The nonlinear $B_t$ depend on total coverage, so class objectives cannot be
optimized independently.  Take the synchronized product of the $c$ local path
graphs.  A product state holds one active edge per class and the exact vector
$r=(r_1,\ldots,r_c)$ of accumulated first-appearance charges.  Over the
common slab of the active edges, score
\[
 W(e_1,\ldots,e_c)=
 \sum_{z_t\text{ in slab}}B_t\!\left(\sum_{h=1}^{c}p_{e_h}(z_t)\right).
 \tag{S.34}
\]
Advance the class whose next anchor endpoint occurs first.  Auxiliary
endpoints are made distinct, so the product is acyclic.  Consecutive
membership makes $r_h$ exactly the number of physical primitives from class
$h$ in the path union; no hidden union history is needed in the state.

For an arbitrary product path, each slab proxy is coverage by a subset of its
final union, so monotonicity of $B_t$ implies product reward no greater than
the actual $F$ value of that union.  For an arbitrary comparison set $J$,
apply the canonical construction in each price class.  The synchronized
canonical product dominates total comparison coverage on every marker, and
its class counts obey
\[
 r_h\le(1+\varepsilon)|J\cap V_h|.
 \tag{S.35}
\]
For each final count vector retain the maximum-reward product path.  At the
canonical vector its retained path has reward at least the comparison reward,
and its physical union has actual $F$ at least its product reward.  Therefore
one returned menu item dominates $J$ and obeys (S.35).

There are at most $\prod_h(|V_h|+1)\le(S+1)^c$ terminal count vectors.
Each local graph has $S^{O(1/\varepsilon)}$ size, so the synchronized dynamic
program has bit complexity
\[
 (S+1)^{O(c/\varepsilon)}\poly(m,L),
 \tag{S.36}
\]
where $L$ is total rational input encoding length.

\subsection{Spectral endpoints and full certificate}
For upper endpoint queries, cell coverage removes exceedances and the exact
gain is
\[
 U(\varnothing)-U(Q)=\sum_t\Delta_t
 [G(b_t)-G(b_t-c_Q(t))],
 \tag{S.37}
\]
a sum of nonnegative nondecreasing functions of coverage.  For lower endpoint
queries,
\[
 L(Q)-L(\varnothing)=\sum_t\Delta_t
 [G(a_t+c_Q(t))-G(a_t)],
 \tag{S.38}
\]
which has the same property.  Thus the menu theorem applies independently to
selected upper and excluded lower rows.  If query class $h$ has price $p_h$,
(S.35) gives
\[
 C(R)=\sum_hp_hr_h\le(1+\varepsilon)C(J).
 \tag{S.39}
\]

Let $J^\star$ be an exact optimum certificate.  Replace every selected row by
an upper-menu item with no larger upper key and every excluded row by a
lower-menu item with no smaller lower key.  The original lexicographic
separator is preserved, while each row price grows by at most
$1+\varepsilon$.  The maximum selected replacement key is one of the scanned
menu cuts.  Therefore the algorithm considers a valid exact-decision
certificate of price at most $(1+\varepsilon)\Ccert$.  Its output is itself a
valid certificate, so
\[
 \boxed{\Ccert\le C_{\rm alg}\le(1+\varepsilon)\Ccert.}
 \tag{S.40}
\]
For fixed $c$ this is a PTAS.  The exponent depends on both $c$ and
$1/\varepsilon$, so the joint parameterization is XP, not FPT, and the scheme
is neither an EPTAS nor an FPTAS.  In the unit-cost special case, integrality
also implies $C_{\rm alg}=\Ccert$ whenever $\varepsilon\Ccert<1$; choosing
$\varepsilon$ at this scale makes the exponent depend on $\Ccert$ and gives
at most an XP-style exact parameterization, not an FPT upgrade.  Zero-price
physical primitives are outside the theorem statement; they can be revealed
for free before applying the positive-price theorem.

\section{Online versus hindsight information}
\label{supp:online}

The PTAS is a hindsight optimizer.  A one-hot closed-box family separates it
from zero-error online acquisition.  Take $n$ unresolved coordinates with
box $[0,2/\lambda_1]^n$ and compare the unknown row with a known risk-one row.
The all-zero realization and any single-hot realization have different
winners.  On a fixed hot realization, querying the hot coordinate alone
certifies, so $\Ccert=1$.

For a deterministic zero-error policy, consider its query order under all-zero
responses and put the hot coordinate at the last location.  Until that query,
the transcript remains compatible with both decisions, so the policy uses all
$n$ queries on that fixed hot realization.  For a randomized zero-error
policy, for each realization of its random bits the average rank of the hot
coordinate over the $n$ possible locations is $(n+1)/2$.  Averaging first
over locations and then randomness implies some fixed hot location has
expected query cost at least $(n+1)/2$.  This is the standard closed-interval
search obstruction; it is reported as a boundary, not as novelty of the
queryable-uncertainty model.

\section{Reproducibility and denominator specification}
\label{supp:protocol}

\subsection{Quadratic screening}
The quadratic tasks are controlled stress tests of the certificate and of the
exact fallback; they use the non-predictive schedulers and are not compared
with $\Ccert$.  The principal matched tasks use $m=12$ candidates, $S=64$
scenarios, $k=3$, ES tail mass $1.6$, initial geometry rank $2$, and
exact-query batches of $32$.  Each regime is one fixed instance.  The fixed constructions are:
\begin{center}
\begin{tabular}{lrrrrr}
\toprule
Regime & seed & curvature & decision margin & tail density & shared-leg reuse\\
\midrule
Benign & 1101 & 0.20 & 1.00 & 0.10 & 0.60\\
Small margin & 1701 & 0.20 & 0.03 & 0.25 & 0.75\\
High curvature & 1401 & 1.50 & 0.75 & 0.20 & 0.60\\
\bottomrule
\end{tabular}
\end{center}
The paid unit is one distinct candidate--scenario primitive query after
structural reuse.  The exhaustive denominators are 4480 on benign/high
curvature and 4224 on small margin: they count the distinct priced primitives
of the full-exact reference (70 per scenario on benign), not $mS$.  On the
benign task \cas{CI} uses 125 queries (\cas{W} 99,
\cas{F} 128).  Candidate-only and scenario-only were
run on benign and small-margin tasks.  High curvature compares the three
\cas{W}, \cas{F} and \cas{CI} schedulers only against the exhaustive denominator; no missing
one-sided baseline is imputed there.

\cas{W} (width) selects the unresolved row with largest ES-width
potential and executes a geometric width-reduction epoch.  The
\cas{F} (frontier) scores primitives by certified interval width,
notional influence, liquidity/tail weight, and distance to the current
$k$th frontier.  \cas{CI} (certificate impact) caps the same influence
proxy by the nearer candidate-specific accept/reject decision gap.  These
scores only order acquisitions; the interval certificate alone authorizes
acceptance or rejection.

PCA and randomized SVD construct low-rank scenario projectors. Their
explained-variance or reconstruction score is never a stopping authority; the
projector is passed to the same deterministic quadratic interval construction
and exact certificate.  For a quadratic response
$f(x)=c+g^\top x+\tfrac12x^\top Hx$, let $z=Px$ and $e=x-z$.  The exact
identity
\[
 f(z+e)-f(z)=g^\top e+z^\top He+\tfrac12e^\top He
 \tag{S.41}
\]
is enclosed by directed arithmetic, Cauchy--Schwarz, and the symmetric
row-sum bound $\|H\|_2\le\max_i\sum_j|H_{ij}|$.  Hence the projector changes
the center/radius of a deterministic interval but never changes certificate
authority.  Randomized SVD uses a fixed seed, ten oversamples, and four power
iterations.  A zero-query cell means the initial certified intervals induced
by that geometry already separate the exact decision.  The PCA
counterexample cited in the main text is a small hand-built instance (four
scenarios, two candidates, top-1) in which uncentered rank-one PCA retains
$100/100.01=99.990001\%$ of variance and the selected candidate has regret 5.

\begin{figure}[tbp]
\centering
\includegraphics[width=0.92\textwidth]{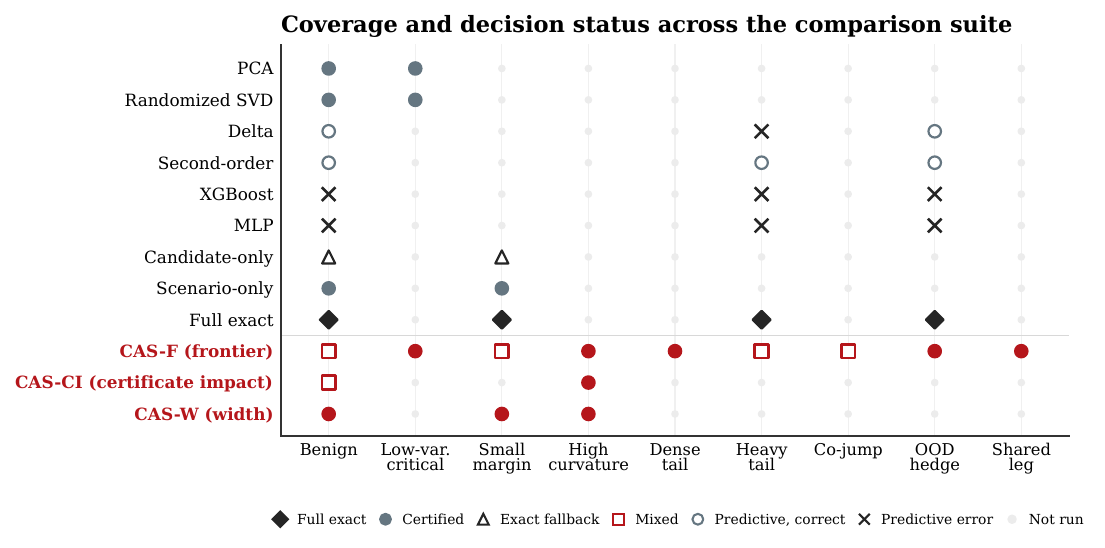}
\caption{Status map of the recorded comparison runs, grouped by method and
regime. Black diamonds: full exact evaluation; filled circles: deterministic
certificates; open triangles: exact fallback; open squares: both certification
and exact fallback occur in the same method--regime group under different
recorded settings (five cells); open circles: predictive outputs that happen
to match the reference; crosses: predictive errors; light dots: combinations
not run. Grouping does not imply every configuration has the displayed status.}
\label{supp:fig-status}
\end{figure}

\begin{figure}[tbp]
\centering
\includegraphics[width=0.70\textwidth]{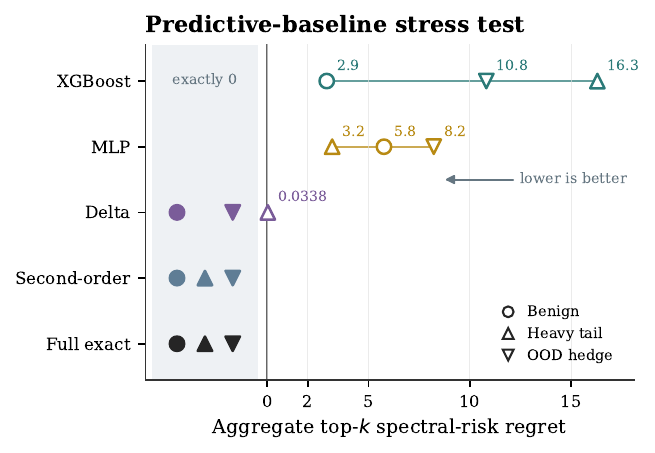}
\caption{Predictive baselines on Merton regimes: aggregate top-$k$
spectral-risk regret (S.42).  The shaded strip marks zero regret.  This is a
predictive-accuracy comparison, not an exact-work comparison; none of these
methods carries a certificate.}
\label{supp:fig-stress}
\end{figure}

\subsection{Predictive Merton baselines}
The predictive comparison uses $m=12$, $S=64$, $k=3$ and benign, Student-$t$
heavy-tail, and OOD-hedge evaluation regimes.  Two deterministic training
tasks use seed offsets 10001 and 20002 from regimes distinct from the
evaluation regime; evaluation portfolio seed, regime, and OOD hedge family
are held out.  XGBoost uses 32 trees of depth 3.  The MLP has one 16-unit
hidden layer and maximum 2000 iterations.  The predictive methods have no
certificate authority.  The plotted regret is
\[
 \sum_{j\in D_{\rm pred}}\rho_j-
 \sum_{j\in D_{\rm exact}}\rho_j,
 \tag{S.42}
\]
so it is measured in the same aggregate spectral-risk score units as the
objective.

\subsection{Spectral compression}
The recorded primary compression population has 2080 rows plus 360 positive
controls.  $K_{\rm dec}^{\sharp}$ is the smallest band count certified by the
specified enhanced correction/partition procedure, not an information-
theoretic dimension.  At $S=128$, a memory-pressure safeguard interrupted
computation: primary-world coverage is 62.5\% for the geometric family,
37.5\% for the linear/Gini-like family, and 0\% for the irregular family.
No $S=128$ irregular-family statistic is imputed.  At relative width
$r=1/2$, the observed median absolute $K_{\rm dec}^{\sharp}$ is 2 for the
geometric family at $S=32,64,128$, 1 for the linear/Gini-like family at all
three observed sizes, and 1 for the irregular family at $S=32,64$.  The
median ratio $K_{\rm dec}^{\sharp}/S$ is $1/64\approx1.6\%$ at $S=64$ and
$S=128$ and $1/32\approx3.1\%$ at $S=32$ and pooled over the computed
population.  The matched $K=2$ ablation uses the same 200 cells (a fixed
prefix of the world groups, not a random sample) for all four rules and yields certification rates $0.35,0.26,0.26,0.73$ for
uniform-rank, equal-spectral-mass, spectrum-only chord/minimax, and
candidate-aware segmentation.

\subsection{Energy}
Before using the point-query certificate comparator, the native energy adapter
passes five model-equivalence checks: (G1) one first-time action--scenario
query triggers exactly one private recourse solve and repeated queries are
memoized; (G2) the paid response equals an independently recomputed exact
finite-horizon battery DP value; (G3) public lower/upper bounds are computed
without consulting the hidden paid response; (G4) each paid primitive carries
one unit of solve cost; and (G5) full revelation under the native oracle gives
the same ID-resolved spectral-risk schedule as direct exact evaluation.  The
gate is what authorizes the clean point-query interpretation for this domain.

The scale grid has $A\in\{8,16,32\}$ actions, $S\in\{32,64,128\}$ scenarios,
$k\in\{1,2\}$, and eight seeds, giving 144 avoidance-only rows. A separately attempted scale-grid exact-ratio computation timed out and carries no computed decision/certificate fields; it is not included in the 144-row avoidance statistic. One paid
primitive is one exact action--scenario battery-recourse dynamic-program
solve, so the exhaustive denominator is
\[
 N_{\rm solve}=AS.
 \tag{S.43}
\]
Summed over the 144 cells, $93.0\%$ of these exact solves are avoided
($1-\sum Q/\sum AS=0.9304$).  The median cell avoids $92.9\%$ ($0.9287$), the
mean cell $91.2\%$, and the worst cell $71.0\%$; the nine $(A,S)$ family
medians lie between $82.8\%$ and $96.0\%$.  These figures are not total
runtime and exclude public-bound computation.  The risk functional is
empirical ES over the worst third of scenarios (tail mass $S/3$, fractional
on this grid).  The upper public bound is the no-battery cost and the lower
bound a per-step relaxation that ignores stock coupling; instances are
synthetic with horizon 4.  The acquisition rule is the non-predictive
scheduler \cas{PD} of \cref{supp:caspg}; \caspg{} was not run on this
domain.  Exact $\Ccert$ and PTAS ratios are reported only on four separate
$S=16$ cells.  On those cells \cas{PD} uses between 2 and 11
times the hindsight certificate optimum (8/4, 29/12, 11/1, 8/1), so the
avoidance result is not described as near-optimal online acquisition.

\subsection{Finance}
The finance grid crosses scenario counts $S\in\{32,64,128\}$ with candidate counts $12,24,48$. It uses shared primitives and interval-valued certified pricing,
not information-disjoint point queries.  Its $2.80\times$, $7.60\times$, and
$17.20\times$ ratios are \emph{naive per-package shock evaluations divided by
distinct shared leg--scenario primitives} for 12, 24, and 48 candidate
packages.  They measure workload sharing/cache opportunity and are not
attributed to decision certification.  The $3.00\times$ fresh-pricer-call
ratio is a separate cross-cell cache experiment comparing one cache shared
across the three candidate-count cells at fixed scenario count with three
independent caches.  Repeated cold-cell timing was run on three scaled cells
and establishes no speedup there (lower 95\% bounds of the paired ratio
$0.961$--$0.973$); a fourth, smaller anchor cell from an earlier grid shows a
marginal ratio of $1.054$ (95\% interval $1.0004$--$1.130$).  The certified path evaluates all shared primitives on this
workload, so there is no certification saving to report.  The finance result is therefore exact fail-closed
compatibility plus shared-work accounting, not a method-specific latency gain.

\section{Certified nonlinear Merton enclosure}
\label{supp:merton}

Use $\nu$ for jump intensity and $S_0$ for spot.  With log-jump mean $\mu_J$,
variance $\delta_J^2$, $m_J=\mu_J+\delta_J^2/2$, and
$\kappa=e^{m_J}-1$, define
\[
 p_n=e^{-\nu T}\frac{(\nu T)^n}{n!},\quad
 A_n=e^{-(q+\nu\kappa)T+n m_J},\quad B=Xe^{-rT},
\]
\[
 h_n=\sqrt{\sigma^2T+n\delta_J^2},\qquad
 z_n=\frac{\log(A_nS_0/B)}{h_n}+\frac{h_n}{2}.
 \tag{S.44}
\]
Merton's European call mixture is
\[
 C(S_0,\sigma)=\sum_{n\ge0}p_n
 \left[A_nS_0\Phi(z_n)-B\Phi(z_n-h_n)\right].
 \tag{S.45}
\]
On the supported compact spot/volatility shock rectangle, the certified
adapter evaluates a second-order Taylor polynomial and adds the Lagrange cubic
remainder using uniform suprema of all third mixed derivatives.  If $F(h)$ is
a mixture term after the volatility reparameterization, the third chain rule
is
\[
 C_{\sigma\sigma\sigma}
 =F_{hhh}(h')^3+3F_{hh}h'h''+F_hh'''.
 \tag{S.46}
\]
The infinite mixture is truncated with a certified tilted-Poisson tail
allowance; elementary and special-function evaluations are enclosed by
outward-rounded interval arithmetic.  These conditions imply the returned
interval contains the exact shocked price on the declared domain.  Unsupported
conditions fail closed to direct certified pricing/refinement.  This interval
oracle is deliberately not identified with the exact point-query oracle used
by the certificate PTAS.

\section{Claim boundary summary}
\begin{itemize}
\item \caspg{} is exact for every predictor (\cref{supp:sound}) and has no cost
guarantee beyond $[\Ccert,mS]$.  Its measured cost is reported on two public
tasks with one deterministic run each; the $1.14$ and $1.18$ multiples use
uncharged historical outcomes on other instances and the $1.28$ and $1.37$
multiples do not.  No algorithm-selection performance claim is made.
\item The energy, quadratic and finance studies use non-predictive
acquisition; their work reductions are properties of the certificate layer.
\item The unrestricted coherent-spectral upper frontier and unrestricted full
verifier remain open; no spectral NP-hardness or parameterized hardness result
is claimed.
\item $K_{\rm dec}^{\sharp}$ is a procedure-relative certified degree.  The
compression study demonstrates small degrees on its computed population;
it does not establish generic runtime acceleration from coarsening.
\item The few-price-class PTAS is hindsight-only, assumes information-disjoint
exact point queries, and requires fixed positive price-class count.
\item Domain experiments establish exact decisions and stated work-count
reductions under their own denominators.  They do not transfer theorem
complexity guarantees across information models.
\end{itemize}

\end{document}